\RequirePackage{fix-cm}
\documentclass[smallcondensed]{svjour3}     %
\smartqed  %

\usepackage{balance}
\usepackage[bookmarks=false,colorlinks=true,citecolor=Blue,linkcolor=Blue]{hyperref}
\usepackage{booktabs}       %
\usepackage{nicefrac}       %
\usepackage[usenames,dvipsnames]{xcolor}

\usepackage{mathtools,bbm}
\usepackage{mathrsfs}
\usepackage{amssymb}
\usepackage {amsmath} 
\usepackage{enumitem}
\usepackage{graphicx,subfigure}    %

\def \R{\mathbb{R}}
\def \Nat{\mathbb{N}}
\def \Pr{\mathbb{P}}
\def \Gr{\mathcal{G}}
\def \Dt{\mathcal{D}_n}
\def \RKHS{\mathcal{H}}
\def \Ker{\mathrm{K}_x}
\def \Reg{\mathrm{\Phi}}

\def \err{\widehat{\varepsilon}}
\def \one{\mathbbm{1}}

\newtheorem{assumption}{Assumption}

\newcommand{\argmin}{\operatornamewithlimits{argmin}} 
\newcommand{\tr}{^\mathrm{T}}  %

\AtBeginDocument{%
  \paperwidth=\dimexpr
    1in + \oddsidemargin
    + \textwidth
    + 1in + \oddsidemargin
  \relax
  \paperheight=\dimexpr
    1in + \topmargin
    + \headheight + \headsep
    + \textheight
    + 1in + \topmargin
  \relax
  \usepackage[pass]{geometry}\relax
}

\usepackage{natbib}

\begin{document}

\title{Distribution-Free Uncertainty Quantification\\ for Kernel Methods by Gradient Perturbations%
}

\titlerunning{Distribution-Free Uncertainty Quantification for Kernel Methods}        %

\author{Bal\'azs~Csan\'ad~Cs\'aji         \and
        Kriszti\'an~Bal\'azs~Kis %
}

\institute{Bal\'azs~Csan\'ad~Cs\'aj \at
			EPIC Centre of Excellence\\
			MTA SZTAKI: Institute for Computer Science and Control\\
			Hungarian Academy of Sciences, Budapest, Hungary\\
              Tel.: +(36)-1-279-6231\quad Fax: +(36)-1-279-7503\\
              \email{balazs.csaji@sztaki.mta.hu}           %
           \and
           Kriszti\'an~Bal\'azs~Kis  \at
           EPIC Centre of Excellence\\           
           MTA SZTAKI: Institute for Computer Science and Control\\
           Hungarian Academy of Sciences, Budapest, Hungary\\
           Tel.: +(36)-1-279-6111\quad Fax: +(36)-1-279-7503\\
           \email{krisztian.kis@sztaki.mta.hu}           %
}

\date{}

\maketitle

\begin{abstract}
We propose a data-driven approach to quantify the uncertainty of models constructed by kernel methods. Our approach  minimizes the needed distributional assumptions, hence, instead of working with, for example, Gaussian processes or exponential families, it only requires knowledge about some mild regularity of the measurement noise, such as it is being symmetric or exchangeable. We show, by building on recent results from finite-sample system identification, that by perturbing the residuals in the gradient of the objective function, information can be extracted about the amount of uncertainty our model has. Particularly, we provide an algorithm to build exact, non-asymptotically guaranteed, distribution-free confidence regions for ideal, noise-free representations of the function we try to estimate. For the typical convex quadratic problems and symmetric noises, the regions are star convex centered around a given nominal estimate, and have efficient ellipsoidal outer approximations. Finally, we illustrate the ideas on typical kernel methods, such as LS-SVC, KRR, $\varepsilon$-SVR and kernelized LASSO. 
\keywords{kernel methods \and confidence regions \and nonparametric regression \and classification \and support vector machines \and distribution-free methods}
\end{abstract}

\newpage
\section{Introduction}
\label{intro}
Kernel methods build on the fundamental concept of Reproducing Kernel Hilbert Spaces \citep{aronszajn1950theory,gine2015mathematical} and are widely used in machine learning \citep{shawe2004kernel,hofmann2008kernel} and related fields, such as system identification \citep{pillonetto2014kernel}. One of the reasons of their popularity is the representer theorem \citep{kimeldorf1971some,scholkopf2001generalized} which shows that finding an estimate in an infinite dimensional space of functions can be traced back to a finite dimensional problem. 
Support vector machines \citep{scholkopf2001learning,steinwart2008support}, rooted in statistical learning theory \citep{Vapnik1998}, are
typical examples of kernel methods.

Besides how to construct efficient models from data, it is also a fundamental question how to quantify the {\em uncertainty} of the obtained models. While standard approaches like Gaussian processes \citep{Rasmussen2006} or exponential families \citep{hofmann2008kernel} offer a nice theoretical framework, making strong statistical assumptions on the system is sometimes unrealistic, since in practice we typically have very limited knowledge about the noise affecting the measurements. Building on asymptotic results, such as limiting distributions, is also widespread \citep{gine2015mathematical}, but they
usually lack 
finite sample guarantees.

Here, we propose a {\em non-asymptotic}, {\em distribution-free} approach to quantify the uncertainty of kernel-based models, which can be used for {\em hypothesis testing} and {\em confidence region} constructions. We build on recent developments in finite-sample system identification \citep{campi2005guaranteed,Algo2018}, more specifically,
we build on 
the Sign-Perturbed Sums (SPS) algorithm \citep{SPSPaper2ITA} and its generalizations, the
Data Peturbation (DP) methods \citep{KolumbanThesis2016}.

We consider the case where there is an underlying ``true'' function that generates the measurements, but we only have noisy observations of its outputs. Since we want to minimize the needed assumptions, 
for example, we do not want to assume that the true underlying function belongs to the Hilbert space in which we search our estimate, we take a ``honest'' approach \citep{li1989honest} and consider ``ideal'' representations of the target function from our function space. A representation is ideal w.r.t.\ the data sample, if its outputs coincide with the corresponding (hidden) noise-free outputs of the true underlying function for all available inputs. 

Despite our method is {\em distribution-free}, i.e., it does not depend on any parameterized distributions, it has strong {\em finite-sample guarantees}. %
We argue that, the constructed confidence region contains the ideal representation {\em exactly} with a user-chosen probability. 
In case the noises are independent and symmetric about zero, and the
objective function 
is convex quadratic, the resulting regions are {\em star convex} and have efficient {\em ellipsoidal outer approximations}, which can be computed by solving semi-definite optimization problems. Finally, we demonstrate our approach on typical kernel  methods, such as KRR, SVMs and kernelized LASSO.

Our approach has some similarities to bootstrap \citep{efron1994introduction} and conformal prediction 
\citep{vovk2005algorithmic}.
One of the fundamental differences w.r.t\ bootstrap is, e.g., that we avoid building alternative samples and fitting bootstrap estimates to them (since it is computationally challenging), but perturb directly the gradient of the objective function. Key differences w.r.t.\ conformal prediction are, e.g., that we want to quantify the uncertainty of the model and not necessarily that of the next observation (though the two problems are related), and more importantly, exchangeability is not fundamental for our approach.

\section{Preliminaries}
A Hilbert space, $\mathcal{H}$, of functions $f: \mathcal{X} \to \mathbb{R}$, with inner product $\left< \cdot, \cdot \right>_{\mathcal{H}}$,
is called a {\em Reproducing Kernel Hilbert Space} (RKHS), if the point evaluation functional 
\begin{equation}
\delta_z : f \to f(z), 
\end{equation}
is continuous (or equivalently bounded) for all $z \in \mathcal{X}$, at any $f \in \mathcal{H}$ \citep{gine2015mathematical}. Then, by using the Riesz representation theorem, one can construct a (unique) kernel, 
$k : \mathcal{X} \times \mathcal{X} \to \mathbb{R}$, having the {\em reproducing} property, that is 
\begin{equation}
\left<\hspace{0.2mm} k(\cdot, z), f \hspace{0.2mm}\right>_{\mathcal{H}} \,=\, f(z),
\end{equation}
for all $z \in \mathcal{X}$ and $f \in \mathcal{H}$. In particular, the kernel satisfies for all $z,s \in \mathcal{X}$ that
\begin{equation}
k(z, s) \,=\, \left<\hspace{0.2mm} k(\cdot, z), k(\cdot, s) \hspace{0.2mm}\right>_{\mathcal{H}}.
\end{equation}
Hence, the kernel of an RKHS is a symmetric and positive-definite function; moreover, the Moore-Aronszajn theorem states that the converse is also true: for every symmetric, 
positive-definite function there is a unique RKHS 
\citep{aronszajn1950theory}.

Typical kernels include, e.g., the Gaussian kernel $k(z, s) = \exp(\nicefrac{-\|z-s\|^2}{2 \sigma^2})$, with $\sigma > 0$, the 
polynomial kernel, $k(z, s) = (\left< z, s \right> + c)^p$, with $c \geq 0$ and $p \in \mathbb{N}$, and the sigmoidal kernel, 
$k(z, s) = \tanh(a \left< z, s \right> + b)$ for some $a, b \geq 0$, where $\left< \cdot, \cdot \right>$ denotes the standard Euclidean inner product \citep{hofmann2008kernel}.

By a {\em data sample}, $\Dt$, we mean a finite set of input-output measurements, 
\begin{equation}
(x_1, y_1), \,\dots,\, (x_n, y_n)\, \in\, \mathcal{X} \times \mathbb{R},
\end{equation}
with $\mathcal{X} \neq \emptyset$.
We also introduce 
$x \doteq (x_1, \dots, x_n)\tr \in \mathcal{X}^n$ and 
$y \doteq (y_1, \dots, y_n)\tr  \in \mathbb{R}^n$.
The {\em Gram matrix} of 
$k(\cdot, \cdot)$, 
w.r.t.\ input 
$x$, is denoted by $\Ker \in \R^{n\times n}$, where 
\begin{equation}
[\,\Ker\,]_{i,j}\, \doteq \,k(x_i, x_j).
\end{equation}
A kernel is called {\em strictly} positive definite if its Gram matrix, $\Ker$, is (strictly) positive definite 
for {\em distinct} inputs $\{x_i\}$ \citep{hofmann2008kernel}.

One of the fundamental reasons for the successes of kernel methods 
is the so-called {\em representer theorem}, originally given by \cite{kimeldorf1971some},
but the generalization presented here is due to \cite{scholkopf2001generalized}. 

\begin{theorem}
Suppose we are given a 
sample, $\Dt$, a 
positive-definite kernel $k(\cdot, \cdot)$, an associated RKHS with a norm $\|\cdot\|_{\mathcal{H}}$ induced by 
$\left< \cdot, \cdot \right>_{\mathcal{H}}$, and a class of functions
\begin{align}
\mathcal{F}\,\, \doteq\, & \,\,\Big\{\, f: \mathcal{X} \to \mathbb{R}\, \mid\, f(z) = \sum_{i=1}^{\infty} \beta_i k(z, z_i),\,
\beta_i \in \R,\, z_i \in \mathcal{X},\, \|f\|_{\mathcal{H}} < \infty \, \Big\},
\end{align}
then, for any monotonically increasing regularization function, $\mathrm{\Lambda}: [0, \infty) \to [0, \infty)$, and an arbitrary loss function $\mathrm{L}: (\mathcal{X} \times \R ^2)^n \to \R \cup \{ \infty \}$, the objective 
\begin{equation}
g(f, \Dt) \; \doteq \; \mathrm{L}\hspace{0.2mm}\big( \,(x_1, y_1, f(x_1)), \dots, (x_n, y_n, f(x_n))\, \big) \,+\, \mathrm{\Lambda}\hspace{0.2mm}(\,\|f\|_{\mathcal{H}}\,),
\end{equation}
has a minimizer 
admitting the following representation\vspace{-1mm}
\begin{equation}
\label{kernel-alpha}
f_{\alpha}(z) \; = \; \sum_{i=1}^{n} \alpha_i\hspace{0.4mm} k(z, x_i),
\vspace{-1.5mm}
\end{equation}
where $\alpha \doteq (\alpha_1, \dots, \alpha_n)\tr \in \R^n$ is the vector of coefficients. If 
$\mathrm{\Lambda}$ is strictly monotonically increasing, then each minimizer admits a representation having form \eqref{kernel-alpha}.
\end{theorem}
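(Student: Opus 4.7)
The plan is to perform the standard orthogonal decomposition of $\mathcal{H}$ relative to the sample-induced subspace, and then to argue that the perpendicular part is invisible to the loss but can only inflate the regularizer. First I would introduce
\begin{equation}
V \,\doteq\, \spn\{\,k(\cdot,x_1),\dots,k(\cdot,x_n)\,\}\,\subseteq\,\mathcal{H},
\end{equation}
a finite-dimensional (hence closed) subspace, and decompose any $f\in\mathcal{F}$ as $f = f_\parallel + f_\perp$ with $f_\parallel\in V$ and $f_\perp\in V^\perp$, using the orthogonal projection of $\mathcal{H}$ onto $V$. Since every element of $V$ is a finite linear combination of the kernel sections $k(\cdot,x_i)$, the projection $f_\parallel$ automatically lies in $\mathcal{F}$.

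The key computation uses the reproducing property to show that the sample evaluations do not see $f_\perp$. Indeed, for every $j\in\{1,\dots,n\}$,
\begin{equation}
f(x_j) \,=\, \langle f, k(\cdot,x_j)\rangle_{\mathcal{H}} \,=\, \langle f_\parallel, k(\cdot,x_j)\rangle_{\mathcal{H}} \,+\, \langle f_\perp, k(\cdot,x_j)\rangle_{\mathcal{H}} \,=\, f_\parallel(x_j),
\end{equation}
because $k(\cdot,x_j)\in V$ while $f_\perp \perp V$. Consequently, the loss $\mathrm{L}$ takes exactly the same value on $f$ and on $f_\parallel$. At the same time, the Pythagorean identity yields $\|f\|_{\mathcal{H}}^2 = \|f_\parallel\|_{\mathcal{H}}^2 + \|f_\perp\|_{\mathcal{H}}^2$, so $\|f_\parallel\|_{\mathcal{H}} \leq \|f\|_{\mathcal{H}}$, and the monotonicity of $\mathrm{\Lambda}$ gives $\mathrm{\Lambda}(\|f_\parallel\|_{\mathcal{H}}) \leq \mathrm{\Lambda}(\|f\|_{\mathcal{H}})$. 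Combining these two observations yields $g(f_\parallel,\Dt) \leq g(f,\Dt)$ for every $f\in\mathcal{F}$, so the objective can be minimized over $V$ in place of $\mathcal{F}$; since every element of $V$ has precisely the form \eqref{kernel-alpha} with $\alpha\in\R^n$, this proves the first claim.

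For the second claim, strict monotonicity of $\mathrm{\Lambda}$ turns the inequality on the regularization term into a strict one whenever $f_\perp\neq 0$, so the identity $g(f_\parallel,\Dt) < g(f,\Dt)$ would contradict minimality of $f$; hence every minimizer must satisfy $f_\perp = 0$ and thus admit a representation of the required form. I do not foresee a serious obstacle: the only technicality worth mentioning is that $\mathcal{F}$ must be closed under the projection $f\mapsto f_\parallel$, but this is immediate since $f_\parallel = \sum_{i=1}^n \alpha_i\, k(\cdot,x_i)$ is a finite kernel expansion with $\|f_\parallel\|_{\mathcal{H}}\leq\|f\|_{\mathcal{H}}<\infty$, so it fits the defining series representation of $\mathcal{F}$ trivially (with at most $n$ nonzero coefficients). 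The argument is also agnostic to the existence of a minimizer in $\mathcal{F}$: it only reduces the search to $V$ and pins down the structural form any minimizer must take.
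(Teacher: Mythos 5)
Your proof is correct and is precisely the canonical argument for this theorem: the paper itself states the result without proof, deferring to \citep{scholkopf2001generalized}, and the proof given there is exactly your orthogonal decomposition $f = f_\parallel + f_\perp$ onto $V = \spn\{k(\cdot,x_i)\}_{i=1}^n$, the reproducing-property computation showing the loss is blind to $f_\perp$, the Pythagorean inequality combined with (strict) monotonicity of $\mathrm{\Lambda}$, and the resulting reduction of any minimizer to its projection. Your additional remarks — that $f_\parallel$ trivially lies in $\mathcal{F}$, and that the argument pins down the form of minimizers without asserting their existence — are also consistent with how the theorem is meant to be read.
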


The theorem can be extended with a bias term \citep{scholkopf2001learning}, in which case if the solution exists, it also contains a multiple of the bias term.
For further generalizations, see \citep{yu2013characterizing, argyriou2014unifying}.

The power of the representer theorem comes from the fact that it shows that computing the point estimate in a high, typically infinite, dimensional space of models can be reduced to a much simpler (finite dimensional) optimization problem whose dimension does not exceed the size of the data sample we have, that is $n$.

If the data is noisy, 
then of course, 
the obtained estimate is a {\em random} function and it is of natural interest to study the {\em distribution} of the resulting function, 
for example, 
to evaluate its {\em uncertainty}
or to test {\em hypotheses} about the system.

\section{Confidence Regions for Kernel Methods}

Now, we turn our attention to a stochastic variant of the problem discussed above. 
There are several advantages of taking a statistical point of view on kernel methods, including conditional modeling, dealing with structured responses, handling missing measurements and building prediction regions \citep{hofmann2008kernel}. 

Following a standard statistical viewpoint \citep{davies2009nonparametric}, we assume that the outputs $\{y_i\}$ are generated by some noisy observations of an underlying ``true'' function, denoted by $f_*$, that is for all $i = 1, \dots, n$, the outputs can be written as
\begin{equation}
y_i \; \doteq \; f_*(x_i)\, +\, \varepsilon_i,
\end{equation}
where $\{\varepsilon_i\}$ are the noise terms. 
The entire noise vector is 
$\varepsilon \,\doteq\, (\varepsilon_1, \dots, \varepsilon_n)\tr$. The noiseless outputs of  function $f_*$ will be denote by $y_i^* \doteq f_*(x_i)$, for $i = 1,\dots,n$.

\subsection{Ideal Representations}
\label{sec:IdealReps}
We aim at {\em quantifying the uncertainty} of our estimated model.
A standard way to measure the quality of a point-estimate is to build {\em  confidence regions} around it. However, it is not obvious what we should aim for with our confidence regions. For example, since all of our models live in our RKHS, $\RKHS$, we would like to treat the confidence region as a subset of $\RKHS$. On the other hand, we want to minimize the assumptions,
for example, we may not want to assume that $f_*$ is an element of $\RKHS$. Furthermore, since unless we make strong smoothness assumptions on the underlying unobserved function, we only have information about it at the actual inputs, $\{x_i\}$. Hence, we aim for a ``honest'' nonparametric approach \citep{li1989honest} and search for functions which correctly describe the hidden function, $f_*$, on the given inputs. Then, by the representer theorem, we may restrict ourselves to a finite dimensional subspace of $\RKHS$. This leads us to the definition of {\em ideal representations}:

\begin{definition}
Let $\RKHS_{\alpha} \subseteq \RKHS$ denote the subspace of functions 
that can be represented as \eqref{kernel-alpha}.
A function $f_0 \in \RKHS_{\alpha}$, having coefficients $\alpha_0 \in \R^n$,
is called an {\em ideal} or noise-free representation of 
the ``true'' unobserved 
function 
$f_*$, if we have
\begin{equation}
f_{0}(x_i) \; =\; y_i^* \; \doteq \; f_*(x_i),\qquad\mbox{for all} \qquad i \in \{ \, 1, \dots, n \,\}.
\label{eq:idealrepdef}
\end{equation}
The set of all ideal representations, w.r.t.\ data sample $\Dt$, is denoted by $\mathcal{H}_{0} \subseteq \mathcal{H}_{\alpha}$, and the set of their coefficients, called {\em ideal coefficients}, is denoted by $A_0 \subseteq \R^n$.
\end{definition}

An ideal representation does not simply interpolate the observed (noisy) outputs $\{y_i\}$, but it interpolates the {\em unobserved} (noise-free) outputs, that is 
 $\{y^*_i\}$.

A natural question which arises is: when does such an ideal representation exist? To answer this question, first note that since ideal representations have the form \eqref{kernel-alpha}, equation system \eqref{eq:idealrepdef} can be rewritten in a matrix form by using the Gram matrix. That is, 
vector $\alpha$ is an ideal coefficient vector, if and only if
\begin{equation}
\Ker\hspace{0.2mm} \alpha \; = \; y^*,
\label{eq:idealrepdef2}
\end{equation}
where 
$y^* \,\doteq\, (y^*_1, \dots, y^*_n)\tr$. 
If $\Ker$ is (strictly) positive definite, which is the case if for example the kernel is Gaussian and all inputs are distinct, then 
$\mbox{rank}(\Ker) = n$ and {\em every} $f_*: \mathcal{X} \to \R$ has a {\em unique} ideal representation w.r.t.\ 
data sample 
$\Dt$.

On the other hand, if $\mbox{rank}(\Ker) < n$, then \eqref{eq:idealrepdef2} places a restriction on the functions which have ideal representations. For example, if $\mathcal{X} = \R$  and $\ker(z,s) = \left<z,s\right> =  z\tr s$, then $\mbox{rank}(\Ker) = 1$ and in general only functions which are {\em linear} on the data sample have ideal representations. This is of course not surprising, as it is well-known that the choice of the {\em kernel} encodes our {\em inductive bias} on the underlying true function we aim at estimating \citep{scholkopf2001learning}.

If $\mbox{rank}(\Ker) < n$ and there is an $\alpha$ which satisfies \eqref{eq:idealrepdef2}, then there are infinitely many ideal representations, as for all $\nu \in \mbox{null}(\Ker)$, the null space of $\Ker$, we have 
$
\Ker\hspace{0.2mm} (\alpha + \nu) \, = \, \Ker\hspace{0.2mm} \alpha\, +\,\Ker\hspace{0.2mm} \nu \, =\, \Ker\hspace{0.2mm} \alpha \,=\, y^*.
$
The opposite is also true, if $\alpha$ and $\beta$ both satisfy \eqref{eq:idealrepdef2}, then $ \Ker\hspace{0.2mm} (\alpha - \beta) \, =\, \Ker\hspace{0.2mm} \alpha\, -\,\Ker\hspace{0.2mm}\beta \,=\,0$, thus, $\alpha - \beta \in \mbox{null}(\Ker)$. Hence, to avoid allowing infinitely many ideal representations, we may form {\em equivalence classes} by treating coefficient vectors $\alpha$ and $\beta$ equivalent if\, 
$\Ker\hspace{0.2mm} \alpha\, = \,\Ker\hspace{0.2mm}\beta$. Then, we can work with the resulting {\em quotient space} of coefficients to ensure that there is only one ideal representation (i.e., one equivalence class of such representations).

All of our theory goes trough if we work with the quotient space of representations, but to simplify the presentation we 
make the assumption (cf.\ Section \ref{sec:assumptions}) that $\Ker$ is full rank, therefore, there always {\em uniquely exists} an ideal representation (for any ``true'' function), whose unique coefficient vector will be denoted by $\alpha^*$.

\subsection{Exact and Honest Confidence Regions}
Let $(\hspace{0.2mm}\mathrm{\Omega}, \mathcal{A}, \{ \Pr_{\theta} \}_{\theta \in \mathrm{\Theta}})$ be a {\em statistical space}, where $\mathrm{\Theta}$ denotes an arbitrary {\em index set}. In other words, for all $\theta \in \mathrm{\Theta}$, $(\hspace{0.2mm}\mathrm{\Omega}, \mathcal{A}, \Pr_{\theta} \hspace{0.1mm})$ is a probability space, where $\mathrm{\Omega}$ is the sample space, $\mathcal{A}$ is the $\sigma$-algebra of events, and $\Pr_{\theta}$ is a probability measure. Note that it is {\em not assumed} that $\mathrm{\Theta} \subseteq \R^d$, for some $d$; therefore, this formulation covers {\em nonparametric} inference, as well (and that is why we do not call $\theta$ a ``parameter'').

In our case, index $\theta$ is identified with the underlying true function, therefore, each possible $f_*$ induces a different probability distribution according to which the observations are generated. Confidence regions constitute a classical form of statistical inference, when we aim at constructing sets which cover with high probability some target function of $\theta$ \citep{degroot2012probability}. These sets are usually random as they are typically built using observations. In our case, we will build confidence regions for the ideal coefficient vector (equivalently, the ideal representation), which itself is a  random element, as it depends on the sample.

Let $\gamma$ be a random element (it corresponds to the available observations), let $g(\theta, \gamma)$ be some target function of $\theta$ (which can possibly also depend on the observations) and let $p \in [\,0,1\,]$ be a target probability, also called significance level. A confidence region for $g(\theta, \gamma)$ is a random set, $C(p, \gamma) \subseteq \mbox{range}(g)$, i.e., the codomain of function $g$. The following definition formalizes two important types of stochastic guarantees for
confidence regions \citep{davies2009nonparametric}.

\begin{definition}
A confidence region $ C(p, \gamma)$ for $g(\theta, \gamma)$ is called {\em exact}, if
\begin{equation}
\forall\,\theta \in \mathrm{\Theta}: \Pr_{\theta}\! \left(\,g(\theta, \gamma) \in C(p, \gamma)\,\right)\, = \; p,
\end{equation}
and 
it is called {\em honest}, if it satisfies\,
$\forall\,\theta \in \mathrm{\Theta}: \Pr_{\theta}\! \left(\,g(\theta, \gamma) \in C(p, \gamma)\,\right)\, \geq \; p.$
\end{definition}

In our case, $\gamma$ is basically\footnote{We used the word ``basically'', since there will also be some other random elements in the construction, e.g., for tie-breaking, and those should also constitute part of observation $\gamma$.} the sample of input-output pairs, $\Dt$; and the target object we aim at covering is $g(\theta, \gamma) = \alpha^*_{\theta}$, i.e., the (unique) ideal coefficient vector corresponding to the underlying true function (identified by $\theta$) and the sample. Since the ideal coefficient vector uniquely determines the ideal representation (together with the inputs, which however we observe), it is enough to estimate the former. The main question of this paper is how can we construct exact or honest confidence regions for the ideal coefficient vector based on a finite sample without strong distributional assumptions on the statistical space.

Henceforth, we will treat $\theta$ (the underlying true function) fixed, and omit the $\theta$ indexes from the notations, to simplify the formulas. Therefore, instead of writing $\Pr_{\theta}$ or $\alpha^*_{\theta}$, we will simply use $\Pr$ or $\alpha^*$. The results are of course valid for all $\theta$.

Standard ways to construct confidence regions for kernel-based estimates
typically either make {\em strong distributional assumptions}, like assuming Gaussian processes \citep{Rasmussen2006}, or resort to {\em asymptotic} results, such as Donsker-type theorems for Kolmogorov-Smirnov confidence bands.
An alternative approach 
is to build on {\em Rademacher complexities}, which can provide non-asymptotic, distribution-free confidence bands
\citep{gine2015mathematical}. Nevertheless, these regions are conservative (not exact) and are constructed independently of the applied kernel method. In contrast, our approach provides {\em exact}, {\em non-asymptotic}, {\em distribution-free} confidence sets for a {\em user-chosen} kernel estimate.

\section{Non-Asymptotic, Distribution-Free Framework}
\label{dfr}
This section presents the proposed 
framework to quantify the uncertainty of kernel-based estimates. It is inspired by and builds on recent results from finite-sample system identification, such as the SPS and DP methods \citep{campi2005guaranteed,SPSPaper2ITA,csaji2016score,KolumbanThesis2016,Algo2018}. Novelties with respect to these approaches are, e.g., that our framework considers {\em nonparametric} regression and does not require the ``true'' function to be in the model class.

\subsection{Distributional Invariance}
\label{sec:distinv}
The proposed method is distribution-free in the sense that it does not presuppose any parametric distribution about the noise vector $\varepsilon$. %
We only assume some mild regularity about the measurement noises, more precisely that their (joint) distribution is invariant with respect to a known group of transformations. 

\begin{definition}
\label{def-dist-invar}
An $\R^n$-valued random vector $v$ is {\em distributionally invariant} with respect to a compact {\em group of transformations}, $(\Gr, \circ)$, where ``$\circ$'' is the function composition and each $G \in \Gr$ maps $\R^n$ to itself, if for all transformation $G \in \Gr$, 
random vectors $v$ and $G(v)$ have the same distribution.
\end{definition}

The two most important examples of the above definition are as follows. 
\smallskip
\begin{itemize}
\item If $\{\varepsilon_i\}$ are {\em exchangeable} random variables, then the (joint) distribution of the noise vector $\varepsilon$ is invariant 
w.r.t.\ multiplications by permutation matrices (which are orthogonal and form a finite, thus compact, group). 
\medskip

\item On the other hand, if $\{\varepsilon_i\}$ are independent, each having a (possibly different!) {\em symmetric} distribution about zero, then the (joint) distribution of $\varepsilon$ is invariant
w.r.t.\ 
multiplications by diagonal matrices having $+1$ or $-1$ as diagonal elements (which are also orthogonal, and form a finite group). 
\end{itemize}
\smallskip

Both of these examples assume only mild regularities about the measurement noises: for example, it is a standard assumption in statistical learning theory that the sample is independent and identically distributed (i.i.d.) which immediately implies exchangeability (which is a more general concept than i.i.d.). But even this assumption can be omitted if we work with symmetric noises, which are widespread as most standard distributions in statistics are symmetric, such as Gauss, Laplace, Cauchy, Student's t, uniform, plus a large class of multimodal ones. 

Note that for these examples no 
assumptions about other properties of the (noise) 
distributions are needed, e.g., they can be {\em heavy-tailed}, with even {\em infinite variance}, skewed, 
their expectations need not exist, hence, {\em no moment assumptions} are necessary. For the case of symmetric distributions, 
it is even allowed that the observations are affected by a noise where each $\varepsilon_i$ has a {\em different} distribution.
\subsection{Main Assumptions}
\label{sec:assumptions}
Before the general construction of our method is explained, first, we highlight the core assumptions we apply. We also discuss their relevance and implications.

\begin{assumption}
\label{A0}
The kernel, $k(\cdot, \cdot)$, is strictly positive definite and all inputs, $\{x_i\}$, are distinct with probability one {\em(}\,in other words, $\forall\,i\neq j: \Pr\hspace{0.2mm}(\hspace{0.2mm}x_i = x_j\hspace{0.2mm}) \,= \,0$\,{\em)}.
\end{assumption}
As we discussed in Section \ref{sec:IdealReps}, this assumption ensures that $\mbox{rank}(\Ker) = n$ (a.s.), hence there {\em uniquely exists} an ideal representation (a.s.), whose unique ideal coefficient vector is denoted by $\alpha^*$. The primary choices are {\em universal} kernels for which $\RKHS$ is dense 
in the space of continuous functions on compact domains of $\mathcal{X}$.

\begin{assumption}
\label{A1}
The input vector $x$ and the noise vector $\varepsilon$ are independent.
\end{assumption}

Assumption \ref{A1} implies that the measurement noises, $\{\varepsilon_i\}$, do not affect the inputs, $\{x_i\}$; for example, the system is not autoregressive. It is possible to extend our approach to dynamical systems, e.g., using similar ideas as in 
\citep{Csaji2012b,csaji2015closed,csaji2016score},  but we leave the extension for future research. Note that Assumption \ref{A1} allows deterministic inputs, as a special case. 

\begin{assumption}
\label{A2}
Noise $\varepsilon$ is distributionally invariant w.r.t.\ a known group of transformations, $(\Gr, \circ)$, where each $G \in \Gr$ acts on $\R^n$ and $\circ$ is the function composition.
\end{assumption}

Assumption \ref{A2} states that we known transformations that do not change the (joint) distribution of the measurement noises. As it was discussed in Section \ref{sec:distinv}, symmetry and exchangeablity are two standard  examples for which we know such group of transformations. Thus, if the noise vector is either exchangeable (e.g., it is i.i.d.), or symmetric, or both properties hold, then the theory applies. We also note that the suggested methodology is not limited to exchangeabe or symmetric noises, e.g.,  power defined noises constitute another example \citep{KolumbanThesis2016}.

\begin{assumption}
\label{A3}
The gradient, or a subgradient, of the objective 
w.r.t. $\alpha$ exists
and it
only depends on the output vector, $y$, through the residuals, i.e.,
there is $\bar{g}$, 
\begin{equation}
\nabla_{\!\alpha}\, g({f}_{\alpha}, \Dt )\; = \; \bar{g}(x, \alpha, \err(x, y, \alpha) ),
\end{equation}
where the residuals w.r.t.\ the sample and the coefficients are defined as
\begin{equation}
\err(x, y, \alpha) \;\doteq\; y\, -\, \Ker\hspace{0.2mm} \alpha.
\end{equation}
\end{assumption}

For Assumption \ref{A3},
it is enough if a subgradient is defined for each coefficient vector $\alpha$,  
hence, e.g., the cases of {\em $\varepsilon$-insensitive} and {\em Huber} loss functions are also covered. Even in such cases (when we work with subderivaties), 
we still treat $\bar{g}$ as a vector-valued function and choose arbitrarily from the set of possible subgradients. 

This requirement is also very mild as
it is typically  the case 
that the objective function is differentiable or convex and has subgradients (we will present several demonstrative examples in Section \ref{sec-examples}); furthermore, the objective
typically only depends on $y$ through the residuals,
which immediately imply Assumption $\ref{A3}$.

To see this assume that $g$ is differentiable; then clearly, if the objective function can be written as $g({f}_{\alpha}, \Dt )\, = \, g_0(x, \alpha, \err(x, y, \alpha) )$ for some function $g_0$, then 
\begin{align}
\nabla_{\!\alpha}\, 
g({f}_{\alpha}, \Dt )\, = & \,\,\,\nabla_{\!\alpha}\! \left( g_0(x, \alpha,  y - \Ker\hspace{0.2mm}\alpha) )\right)\nonumber \\[1mm]
=& \,\,\, -\Ker \left( \nabla_{\!\alpha}\, g_0 \right) (x, \alpha,  y - \Ker\hspace{0.2mm}\alpha))\nonumber \\[1mm]
= & \,\,\, \bar{g}(x, \alpha, \err(x, y, \alpha) ),
\end{align}
where during the derivation we applied the chain rule, used the fact that matrix $\Ker$ is symmetric and the definition of the residuals, $\err(x, y, \alpha)\, =\, y - \Ker\hspace{0.2mm} \alpha$.

\subsection{Perturbed Gradients}
At first, the proposed method can be understood as a {\em hypothesis testing} approach. Given coefficient vector $\alpha \in \R^n$ we test the null hypothesis $H_0: \alpha \, = \, \alpha^*$, i.e., it is the ideal coefficient vector;
against the alternative hypothesis $H_1: \alpha\, \neq\, \alpha^*$.  Under $H_0$, the residuals of $f_{\alpha}$ coincide with the ``true'' (unobserved) noise terms, since by definition (for ideal representations), we have
\begin{align}
\err(x, y, \alpha^*) \,=\, &\,\,\,  y \,-\, \Ker\hspace{0.2mm} \alpha^* \nonumber  \\[1mm]
= \, &\,\,\, [\,f_*(x_1) + \varepsilon_1, \dots, f_*(x_n) + \varepsilon_n\,]\tr \nonumber \\[1mm]
- \, & \, \, \, [\,f_*(x_1), \dots, f_*(x_n)\,]\tr \,\,=\,\, \varepsilon.
\end{align}
Consequently, based on the group of invariant transformations, $\Gr$, we know that the (joint) distribution of the residuals does not change if we transform them by any $G \in \Gr$ (under $H_0$). Then, we can generate alternative realizations of the residuals, $\err(x, y, \alpha^*)$, by applying a random transformation $G \in \Gr$, and the resulting alternative realization, $G(\err(x, y, \alpha^*))$, will behave ``similarly'' (in the statistical sense) to the original residual vector (i.e., the true noise vector).

However, under $H_1$, 
if 
coefficient vector 
$\alpha$ does not define an ideal representation, 
$\err(x, y, \alpha)$, in general, will not coincide with the true noises. Therefore, the distributions of their randomly transformed variants will be distorted and will statistically {\em not} behave ``similarly'' to the original residuals. 

Of course, we need a way to measure ``similar behavior''.
Since we want 
to measure the uncertainty of a model constructed by using a certain objective function, we will measure similarity by recalculating (the magnitude of) its gradient (w.r.t.\ $\alpha$) 
with the transformed residuals and apply a rank test \citep{good2005permutation}.

Let us define a {\em reference} function, $Z_0 : \R^n \to \R$, and $m-1$ {\em perturbed} functions, $\{Z_i\}$, with $Z_i: \R^n \to \R$, where $m$ is a user-chosen hyper-parameter, as follows
\begin{align}
\label{refpert}
Z_0(\alpha) \, &\doteq \,  \|\, \mathrm{\Psi}(x) \, \bar{g}(x, \alpha, G_0 (\err(x, y, \alpha)) ) \, \|^2,\\[2mm]
\label{refpert2}
Z_i(\alpha) \, &\doteq \,  \|\, \mathrm{\Psi}(x) \, \bar{g}(x, \alpha, G_i (\err(x, y, \alpha)) ) \, \|^2,
\end{align}
for $i= 1, \dots, m-1$, where $\mathrm{\Psi}(x)$ is some (possibly input dependent) positive definite weighting matrix, $G_0$ is the {\em identity} element of $\Gr$ (w.l.o.g.\ the identity transformation), and $\{G_i\}$ are i.i.d.\ random transformations from $\Gr$, sampled using the {\em uniform} distribution on $\Gr$. They are generated independently of the other random elements of the system, such as the input vector $x$ and the noise vector $\varepsilon$.
For symmetric noises, transformation $G_i \in \Gr$ is basically a random $n \times n$ diagonal matrix whose diagonal elements are $+1$ or $-1$, each having $\nicefrac{1}{2}$ probability to be selected, independently of the other elements of the diagonal. 

On the other hand, for the case of exchangeable noise terms, each transformation $G_i \in \Gr$ is a randomly (uniformly) chosen $n \times n$ permutation matrix.

Weighting matrix $\mathrm{\Psi}(x)$ is included in the construction to allow some additional flexibility, e.g., if we have some a priori information on the measurement noises.
We will see an example for 
the special case of quadratic objectives in Section \ref{subsec:quadratic}.
In case no such information is available, $\mathrm{\Psi}(x)$ can be chosen as identity.

We can observe that for the ideal coefficient vector $\alpha^*$, we have
\begin{align}
Z_0(\alpha^*)\, \, = & \,\,\,  \|\, \mathrm{\Psi}(x) \,  \bar{g}(x, \alpha^*, \varepsilon ) \, \|^2 \nonumber \\[1mm]
{\buildrel d \over =} & \,\,\, \|\, \mathrm{\Psi}(x) \, \bar{g}(x, \alpha^*, G_i (\varepsilon) ) \, \|^2 \nonumber  \\[1.5mm]
=  & \,\,\,  Z_i(\alpha^*),\\[-7mm]
\nonumber
\end{align}
for $i= 1, \dots, m-1$, where ,,${\buildrel d \over =}$'' denotes equality in distribution.
Therefore, the $\{Z_i(\alpha^*)\}_{i=0}^{m-1}$ variables have the same (marginal) distribution, though, they are of course not independent. It can be shown, however, that they are {\em conditionally} independent,
and therefore 
all of their possible orderings are equally likely, with possible tie-breakings, which can be used to measure {\em similar} behavior.

On the other hand, for $\alpha \neq \alpha^*$, this distributional equivalence does not hold, and we expect that if\,
$\|\, \alpha - \alpha^*\, \|$ 
is large enough, 
the reference element $Z_0(\alpha)$ will dominate the perturbed elements, $\{Z_i(\alpha)\}_{i=1}^{m-1}$, with high probability, from which we can detect (statistically) that coefficient vector $\alpha$ is not the ideal one, $\alpha \neq \alpha^*$.

\subsection{Normalized Ranks}
Now, we make our argument, including possible tie-breakings, more precise by introducing the concept of normalized ranks.  
Formally, the {\em normalized rank} of the reference element, $Z_0(\alpha)$,
among all $\{Z_i(\alpha)\}_{i=0}^{m-1}$ elements is defined as follows
\begin{equation}
\mathcal{R}(\alpha)\, \doteq \,\mathcal{R}_m(\alpha)\,  \doteq\, \frac{1}{m} \bigg[\, 1 + \sum_{i=1}^{m-1} \mathbb{I}\left( Z_0(\alpha) \prec_{\pi} Z_i(\alpha) \right) \bigg],
\end{equation}
where $\mathbb{I}(\cdot)$ is an indicator function, namely, its value is $1$ if its argument is true and $0$ otherwise; 
$m \in \Nat$ is a user-chosen hyper-parameter;
and binary relation ``$\prec_{\pi}$'' is the standard ``$<$'' with random tie-breaking (according to a fixed, pre-generated random order). More precisely, let $\pi$ be a random (uniformly chosen) permutation of the set $\{ 0, \dots, m-1 \}$. Then, given $m$ arbitrary real numbers, $Z_0, \dots, Z_{m-1}$, we can construct a strict total order, denoted by ``$\prec_{\pi}$'', by defining $Z_k \prec_{\pi} Z_j$ if and only if $Z_k < Z_j$ or it both holds that $Z_k = Z_j$ and $\pi(k) < \pi(j)$.

\subsection{Exact Confidence}
Parameter $m$ influences the resolution of the confidence probability we can achieve. Namely, a probability $p \in (0, 1)$ is {\em admissible} if it can be written in the form of $p = 1 - \nicefrac{q}{m}$, where $q$ is an integer satisfying $0 < q < m$. On the other hand, since both $m$ and $q$ are (hyper) parameters, their values are user-chosen.
Hence, every rational probability $p \in (0, 1)$ is admissible, by choosing $m$ and $q$ appropriately. Then, a confidence set for an admissible probability $p = p(m,q)$ is 
\begin{equation}
A_p \, \doteq \, \left\{ \, \alpha : \mathcal{R}(\alpha) \leq p \, \right\} \, = \,\left\{ \, \alpha : \mathcal{R}_m(\alpha) \leq 1 - \nicefrac{q}{m} \, \right\}.
\end{equation}

One of the main questions is: what kind of stochastic guarantees do such confidence regions have? The following theorem states that they are {\em exact}.

\begin{theorem}
\label{thm-exact}
Under Assumptions \ref{A0}, \ref{A1}, \ref{A2} and \ref{A3}, 
the coverage probability of the constructed confidence region with respect to the ideal coefficient vector $\alpha^*$ is
\begin{equation}
\Pr \,\big(\, \alpha^* \in A_p \,\big) \, = \,\, p
\,\, = \,\, 1 - \frac{q}{m},
\end{equation}
for any choice of the integer hyper-parameters satisfying\, $0\,<\, q\, < \,m$.
\end{theorem}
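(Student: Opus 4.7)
The plan is to reduce the coverage statement to the assertion that, when evaluated at the ideal coefficient vector $\alpha^*$, the reference value $Z_0(\alpha^*)$ and the perturbed values $Z_1(\alpha^*),\dots,Z_{m-1}(\alpha^*)$ form an exchangeable collection of real-valued random variables; combined with the uniformly random tie-breaking via $\pi$, this forces the normalized rank $\mathcal{R}(\alpha^*)$ to be uniformly distributed on $\{1/m,\dots,m/m\}$, from which the claimed probability follows by counting.

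First I would plug $\alpha^*$ into the residuals. By Assumption~\ref{A0}, $\Ker$ is a.s.\ invertible and $\alpha^*$ exists uniquely; since $y=\Ker\alpha^*+\varepsilon$ by construction, $\err(x,y,\alpha^*)=y-\Ker\alpha^*=\varepsilon$. Assumption~\ref{A3} then gives
\begin{equation*}
Z_i(\alpha^*) \;=\; h\bigl(x,\, G_i(\varepsilon)\bigr), \qquad i=0,\dots,m-1,
\end{equation*}
with $h(x,e)\doteq\|\mathrm{\Psi}(x)\,\bar g(x,\alpha^*,e)\|^2$ and $G_0$ the identity element of $\Gr$.

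The core step is exchangeability of $(Z_0(\alpha^*),\dots,Z_{m-1}(\alpha^*))$. The subtlety is that $G_0$ is deterministic while $G_1,\dots,G_{m-1}$ are Haar-uniform on $\Gr$, so a direct coordinate permutation does not apply; I would symmetrize by an auxiliary random group element. Introduce a fresh $\tilde G$ drawn from the Haar measure on $\Gr$, jointly independent of $x,\varepsilon,G_1,\dots,G_{m-1}$ and $\pi$. By Assumptions~\ref{A1} and~\ref{A2}, conditional on $(x,G_1,\dots,G_{m-1})$ the vector $\tilde G(\varepsilon)$ has the same law as $\varepsilon$, hence
\begin{equation*}
\bigl(h(x,\varepsilon),\,h(x,G_1\varepsilon),\dots,h(x,G_{m-1}\varepsilon)\bigr) \,\stackrel{d}{=}\, \bigl(h(x,\tilde G\varepsilon),\,h(x,G_1\tilde G\varepsilon),\dots,h(x,G_{m-1}\tilde G\varepsilon)\bigr).
\end{equation*}
Compactness of $\Gr$ and left-invariance of Haar measure then imply that $(\tilde G,G_1\tilde G,\dots,G_{m-1}\tilde G)$ is i.i.d.\ uniform on $\Gr$. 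Consequently, conditional on $(x,\varepsilon)$, the right-hand coordinates are i.i.d., which gives exchangeability; integrating over $(x,\varepsilon)$ preserves it.

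With exchangeability in hand, the independent uniform permutation $\pi$ promotes $\prec_\pi$ to a strict total order, so all $m!$ orderings of $(Z_0(\alpha^*),\dots,Z_{m-1}(\alpha^*))$ are equally likely and the rank of $Z_0(\alpha^*)$ is uniform on $\{1,\dots,m\}$. Since exactly $m-q$ of these ranks satisfy $\mathcal{R}(\alpha^*)\le 1-\nicefrac{q}{m}$, the event $\{\alpha^*\in A_p\}$ has probability $(m-q)/m=p$. I expect the exchangeability step to be the chief technical obstacle: it is the only place where the group structure of $\Gr$ and the distributional invariance of $\varepsilon$ are both essential, and it requires the Haar-uniform coupling precisely to break the asymmetry between the deterministic $G_0$ and the random $G_1,\dots,G_{m-1}$.
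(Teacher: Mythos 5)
Your proof is correct, and it takes a genuinely different route from the paper at the one step that matters. Both arguments share the same skeleton: reduce the coverage claim to the fact that $Z_0(\alpha^*), Z_1(\alpha^*), \dots, Z_{m-1}(\alpha^*)$ are uniformly ordered under $\prec_{\pi}$ (your exchangeability-plus-uniform-tie-breaking formulation is equivalent to this), then count the $m-q$ admissible positions of the reference element; your use of $\err(x,y,\alpha^*)=\varepsilon$ and of Assumptions \ref{A0} and \ref{A3} matches the paper. The difference is how the ordering property is established. The paper does not prove it directly: it recasts the construction as a Data Perturbation (DP) method, introducing the perturbed outputs $y^{(i)} = f_{\alpha}(x) + G_i(\err(x,y,\alpha))$ and the performance measure $Z(\alpha, x, y^{(i)})$, and then invokes Theorem 2.17 of \citep{KolumbanThesis2016}, verifying its hypotheses from Assumptions \ref{A1}--\ref{A3}. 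You instead prove the distributional symmetry from first principles via a Haar coupling: an auxiliary uniform $\tilde{G}$, independence (Assumption \ref{A1}) and invariance (Assumption \ref{A2}) to replace $\varepsilon$ by $\tilde{G}(\varepsilon)$, and translation invariance of the uniform (Haar) measure on a compact group to conclude that $(\tilde{G},\, G_1 \circ \tilde{G}, \dots, G_{m-1}\circ \tilde{G})$ is i.i.d.\ uniform, whence the $Z_i(\alpha^*)$ are conditionally i.i.d.\ given $(x, \varepsilon)$ and hence exchangeable. This is, in effect, a self-contained re-derivation of the special case of the cited DP theorem that this theorem needs; it buys transparency (one sees exactly where compactness, the group structure, and each assumption enter) and independence from the external reference, whereas the paper's citation-based route buys brevity and places the construction inside the general DP framework, which covers more than this particular setup. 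Two minor points to tighten in your write-up: the invariance needed for $G_i \circ \tilde{G}$ with $\tilde{G}$ held fixed is \emph{right}-invariance of the Haar measure (which coincides with left-invariance on compact groups, so your appeal to compactness suffices, but it should be named correctly); and the passage from exchangeability to ``all $m!$ orderings are equally likely'' tacitly uses that $\pi$ is independent of the $Z_i(\alpha^*)$, so that the pairs $(Z_i(\alpha^*), \pi(i))$ are jointly exchangeable and their lexicographic order is almost surely a strict total order --- this deserves one explicit sentence.
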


\begin{proof}
Following \citep{SPSPaper2ITA}, the core idea is to show that variables
\begin{equation}
Z_0(\alpha^*),\; Z_1(\alpha^*),\; \dots,\; Z_{m-1}(\alpha^*)
\end{equation}
are {\em uniformly ordered}, which means that each ordering of them, with respect to the strict total order $\prec_{\pi}$, has the same probability, that is $1/m!$, formally,
\begin{equation}
\Pr \,\big(\, Z_{i_0}(\alpha^*) \prec_{\pi} Z_{i_2}(\alpha^*) \prec_{\pi} \dots \prec_{\pi} Z_{i_{m-1}}(\alpha^*) \,\big) \, = \,\, \frac{1}{m!},
\end{equation}
where $(i_0, i_1, \dots, i_{m-1})$ is an arbitrary permutation of $(0,1,\dots, m-1)$. This ordering property is not obvious, since they are not independent, 
even though we already observed that they are identically distributed (for ideal coefficients).

By definition,  $\alpha^* \in A_p$ if and only if $\mathcal{R}(\alpha^*) \leq 1 - \nicefrac{q}{m}$, i.e., if the reference element, $Z_0(\alpha^*)$ takes one of the positions $1, \dots, m-q$ in the ordering of $\{Z_i(\alpha^*)\}_{i=0}^{m-1}$ variables, w.r.t.\ the strict total order $\prec_{\pi}$. Then, assuming they are uniformly ordered (yet to be shown), we know that $Z_0(\alpha^*)$ takes each position in the ordering with probability exactly 
$1/m$. Therefore,  for $i \in \{1, \dots, m\}$, we have
\begin{equation}
\mathbb{P}\hspace{0.1mm}\Big(\,\mathcal{R}(\alpha^*) \,=\, \frac{i}{m}\;\Big)\, =\, \frac{1}{m},
\end{equation}
from which it follows that $\mathbb{P}\bigl(\alpha^* \in A_p\bigr)\, =\, 1 - \nicefrac{q}{m}$ by taking into account that events $\{\,\mathcal{R}(\alpha^*) = \nicefrac{i}{m}\,\}$ and $\{\,\mathcal{R}(\alpha^*) = \nicefrac{j}{m}\,\}$ are disjoint, if $i \neq j$.

In order to show that $\{Z_i(\alpha^*)\}_{i=0}^{m-1}$ are indeed uniformly ordered, we can apply Theorem 2.17 of \citep{KolumbanThesis2016}. Our proposed approach can be interpreted as a variant of 
a DP method, even though formally the DP ``performance measures'' can depend on the parameters, $\alpha$, the inputs, $x$, and the perturbed outputs, $y^{(i)}$, but not directly on the perturbed residuals. Nevertheless, in our case, $y^{(i)}$ is 
\begin{equation}
y^{(i)} \, \doteq \, f_{\alpha}(x)\, +\, G_i (\err(x, y, \alpha)),
\end{equation}
where $f_{\alpha}(x) \doteq [\,f_{\alpha}(x_1), \dots, f_{\alpha}(x_n)  ]\tr$. Then, obviously we can compute the transformed residuals, $G_i (\err(x, y, \alpha))$, from $\alpha$, $x$, and $y^{(i)}$ by using that $G_i (\err(x, y, \alpha)) = y^{(i)} -  f_{\alpha}(x)$. Hence, 
the DP performance measure in our case is defined as
\begin{equation}
Z(\alpha, x, y^{(i)}) \,\doteq \,\|\, \mathrm{\Psi}(x)\, \bar{g}(x, \alpha, y^{(i)} - f_{\alpha}(x) ) \, \|^2,
\end{equation}
which now fits the DP framework. Our Assumption \ref{A3} ensures that this function is well-defined and, together with Assumption \ref{A1}, it also guarantees that we do not need to compute $\{y^{(i)}\}$ to evaluate the perturbed functions. Our Assumption \ref{A2} directly states that the noise, $\varepsilon$, is invariant under a compact group of transformations, which is a requirement of Theorem 2.17, and we already observed that true errors coincide with the residuals of ideal representations, $\err(x, y, \alpha^*)\, =\, \varepsilon$. 
\qed
\end{proof}

Theorem \ref{thm-exact} shows that the confidence region
contains the ideal coefficient vector
{\em exactly} with probability $p$ that statement is {\em non-asymptotically} guaranteed, despite the method is {\em distribution-free}.
Since $m$ and $q$ are user-chosen (hyper-parameters), the confidence probability is {\em under our control}. 
The confidence level does not depend on the weighting matrix, 
but it influences the {\em shape} of the region. Ideally, it should be proportional to the square root of the covariance of the estimate.

\subsection{Quadratic Objectives and Symmetric Noises}
\label{subsec:quadratic}
If we work with convex {\em quadratic} objectives, which have special importance for kernel methods \citep{hofmann2008kernel}, and assume independent and {\em symmetric} noises, 
we get the Sign-Perturbed Sums (SPS) method \citep{SPSPaper2ITA} as a special case (using the inverse square root of the Hessian as a weighting matrix).

The SPS method uses the classical least-squares (LS) objective function,
\begin{equation}
g(f_{\alpha}, \Dt) \, = \, \|\,z \,-\, \Reg\hspace{0.2mm} \alpha\, \|^2,
\label{eq:LSobj}
\end{equation}
where $z$ denotes the vector of outputs and $\Reg$ is the regressor matrix. Objective \eqref{eq:LSobj} can be seen
the canonical form of many quadratic functions (cf.\ Section \ref{sec-examples}).

When using the SPS method, we 
make the following assumptions:
the noise terms, $\{\varepsilon_i\}$, are independent and have symmetric distributions about zero; and
the regressor matrix, $\Reg$, 
has independent rows, it is skinny and
full rank.
For SPS, the reference and the perturbed functions are defined as
\begin{equation}
\label{spseval}
Z_i(\alpha) \, \doteq \, \| \, (\Reg\tr\Reg)^{-\nicefrac{1}{2}} \Reg\tr G_i (z - \Reg\hspace{0.2mm} \alpha)\, \|^2,
\end{equation}
for $i= 0, \dots, m-1$, where $G_i = \mbox{diag}(\sigma_{i, 1}, \dots, \sigma_{i, n})$, for $i \neq 0$, where random variables $\{ \sigma_{i, j} \}$ 
are i.i.d.\ having {\em Rademacher} distribution, i.e., they 
take values $+1$ and $-1$ with probability $\nicefrac{1}{2}$ each; and $G_0 = I_n$ is the identity matrix.

It is easy to see that \eqref{spseval} is a special case of construction \eqref{refpert}-\eqref{refpert2}, where $z$ are the outputs and $\Reg$ is computed from the inputs. Besides being {\em exact}, the confidence regions of SPS have additional important properties,
such as
they are {\em star convex} with the LS estimate, $\widehat{\alpha}$, as a star center \citep{SPSPaper2ITA}. Moreover, they have {\em ellipsoidal outer approximations}, that is there are regions of the form 
\begin{equation}
 A^\circ_p \; \doteq \; \Big\{\, \alpha \in \mathbb{R}^n\, :\, (\alpha-\widehat{\alpha})^\mathrm{T}\frac{1}{n}\Reg\tr\Reg(\alpha-\widehat{\alpha})\,\leq\, r \, \Big\},
\end{equation}
where $A_p \, \subseteq  A^\circ_p$ and radius of the ellipsoid, $r$, can be computed (in polynomial time) by solving semi-definite programming problems \citep{SPSPaper2ITA}.

Hence, for quadratic  problems, 
the obtained 
regions are star convex, thus connected, have ellipsoidal outer approximation, thus bounded. These properties ensure that it is easy to work with them. For example, using star convexity and boundedness, we can efficiently explore the region by knowing that every point of it can be reached from the given star center by a line segment inside the region. Moreover, the ellipsoidal outer approximation provides a compact representation.
\section{Applications and Experiments}
\label{sec-examples}
In this section, we show specific applications of the proposed uncertainty quantification (UQ) approach for typical kernel methods, such as LS-SVC, KRR,  $\varepsilon$-SVR and KLASSO, in order to demonstrate the usage and the power of the framework.
We also present several numerical experiments to illustrate the family of confidence regions we get for various confidence levels. We always set hyper-parameter $m$  to $100$ in the experiments.
The figures were constructed by Monte Carlo simulations, i.e., evaluating $1\,000\,000$ random coefficients and drawing the graphs of their induced models with colors indicating their confidence levels. 

\vspace{-1mm}
\subsection{Uncertainty Quantification for
Least-Squares Support Vector Classification}
We start with a classification problem and consider the Least-Squares Support Vector Classification (LS-SVC) method \citep{suykens1999least}.
LS-SVC under the Euclidean distance 
is known to be equivalent to hard-margin SVC using the Mahalanobis distance \citep{ye2007svm}. It has the advantage that it can be solved by a system of linear equations, in contrast to a quadratic 
problem.

We assume that $x_k \in \R^d$ and $y_k \in \{+1, -1\}$, for all $k \in \{1, \dots n\}$, as well as
that the slack variables, i.e., the algebraic (signed) distances of the objects from the corresponding margins, are {\em independent} and distributed {\em symmetrically}, for the ideal representation; which we will identify with the best possible classifier.

For simplicity, we consider {\em linear} classification, that is models of the form
\begin{equation}
h_{\alpha} (x_k) \; \doteq \; \mbox{sign}(\, w\tr x_k +  b\,) \,=\, \mbox{sign}(\, \alpha \tr \tilde x_k \,),
\end{equation}
where $x_k$ is an input vector, $\alpha \, \doteq\, [\,b,\, w\tr\,]\tr$ and $\tilde x_k\, \doteq\, [\,1,\, x_k\tr\,]\tr$.

The standard (primal) formulation of (soft-margin) LS-SVM classifcation is
\begin{align}
\mbox{minimize}\quad& \frac{1}{2}\, w\tr w \,+\, \lambda \sum_{k=1}^n \xi^2_k\\[1mm]
\mbox{subject to}\quad& y_k (w\tr x_k + b) \,=\, 1 - \xi_k\\[-3mm]
\nonumber
\end{align}
for $k= 1, \dots, n$, where $\lambda > 0$ is fixed. Variables $\{\xi_i\}$ are called the {\em slack variables}. 
The convex quadratic problem above can be rewritten as minimizing
\begin{equation}
g(f_{\alpha}, \Dt) \;\doteq\; \frac{1}{2}\, \|\, B\hspace{0.2mm} \alpha \,\|^2 \,+\, \lambda \, \| \,\one_n - y \odot (X \alpha) \, \|^2,
\label{eq:LSSVMojb}
\end{equation}
where $\one_n \in \R^n$ is the all-one vector, $\odot$ denotes the Hadamard (entrywise) product, $X \doteq [\,\tilde x_1, \dots, \tilde x_n\,]\tr$ and the role of matrix $B$ is to remove the bias, $b$, from $\alpha$, i.e.,
$B \,\doteq\, \mbox{diag}(0, 1, \dots, 1)$. Note that the reformulated problem \eqref{eq:LSSVMojb} is unconstrained.

Observe that the objective function, 
$g(f_{\alpha}, \Dt)$,
can be further reformulated to take the canonical form of $\|\,z - \Reg\hspace{0.2mm} \alpha\, \|^2$ by using the following 
$\Reg$ and $z$,
\begin{equation}
\Reg \;=\; \left[ 
\begin{array}{c}
\sqrt{\lambda}\, (y\one_d\tr) \odot X \\[1.5mm]
(\nicefrac{1}{\sqrt{2}})\, B
\end{array}  
\right]\!,
\qquad\text{and}\qquad
z \;=\; \left[ 
\begin{array}{c}
\sqrt{\lambda}\, \one_n \\[1mm]
0_d
\end{array}  
\right]\!,
\label{eq:LSSVMreg}
\end{equation}
where
$0_d \in \R^d$ 
is the all-zero vector. Then, we can apply SPS to the obtained (ordinary) LS formulation. However, we should be a careful with the transformations, as the new problem has some auxiliary output terms, the zero part of $z$, for which there are no slack variables.
The residuals corresponding to that part are not even stochastic, therefore, the last $d$ terms of the residual vector, $z - \Reg\hspace{0.2mm} \alpha$, should not be perturbed. 
Consequently, the transformation matrices $\{{G}_i\}$ are defined as
\begin{equation}
G_i \,\,\doteq \,\, \left[ 
\begin{array}{cc}
\,\bar{G}_i &\,\,\, 0\; \\[1mm]
\,0 &\,\,\, I\;
\end{array}  
\right]\!,
\label{eq:LSSVMtrsG}
\end{equation}
for $i=0,\dots, m-1$, where $\bar{G}_0 = I_n$ is the identity, and $\bar{G}_i \doteq \mbox{diag}(\sigma_{i,1}, \dots, \sigma_{i,n})$, for $i\neq 0$, where $\{\sigma_{i,j}\}$ are i.i.d.\ Rademacher random variables, as before.

Then, (exact) confidence regions and (honest) ellipsoidal outer approximations can be constructed for the best linear classifier in the domain of coefficients by the SPS method, i.e., \eqref{spseval}, with regressor matrix and output vector as defined in \eqref{eq:LSSVMreg} and transformations as in \eqref{eq:LSSVMtrsG}. The regions will be centered around the LS-SVM classifier, i.e., for all (rational) $p \in (0, 1)$, the coefficients of LS-SVC are contained in $A_p$, assuming it is non-empty. As each coefficient vector uniquely identifies a classifier, the obtained region can be mapped to the model space, as well.

UQ for LS-SVC  is illustrated in Figure \ref{fig:LSsvm}. 
The observations were generated by adding Laplace noises to the coordinates of the corresponding class centers.
The constructed confidence regions are shown both in the coefficient and model spaces, without the bias term, for simplicity. The possibility of constructing (honest) ellipsoidal outer approximations of the (exact) regions is also illustrated.

\begin{figure}[!t]
    \centering
   	\subfigure[UQ for LS-SVC, SPS, model space]{\label{fig:DPDesign}\includegraphics[height=53mm]{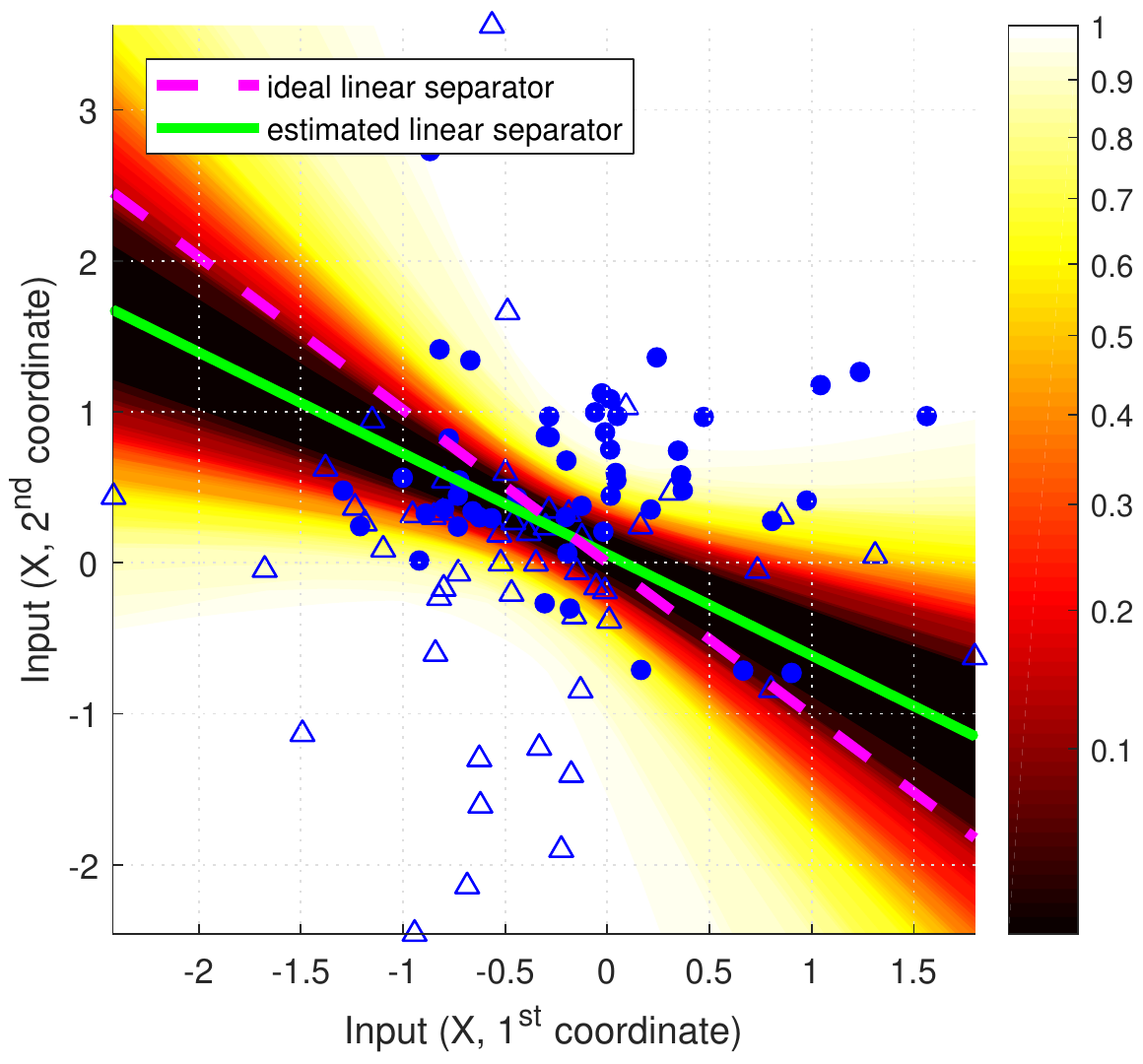}}\quad
 	\subfigure[UQ for LS-SVC, SPS, coeffcient space]{\label{fig:LS-SVC}\includegraphics[height=53mm]{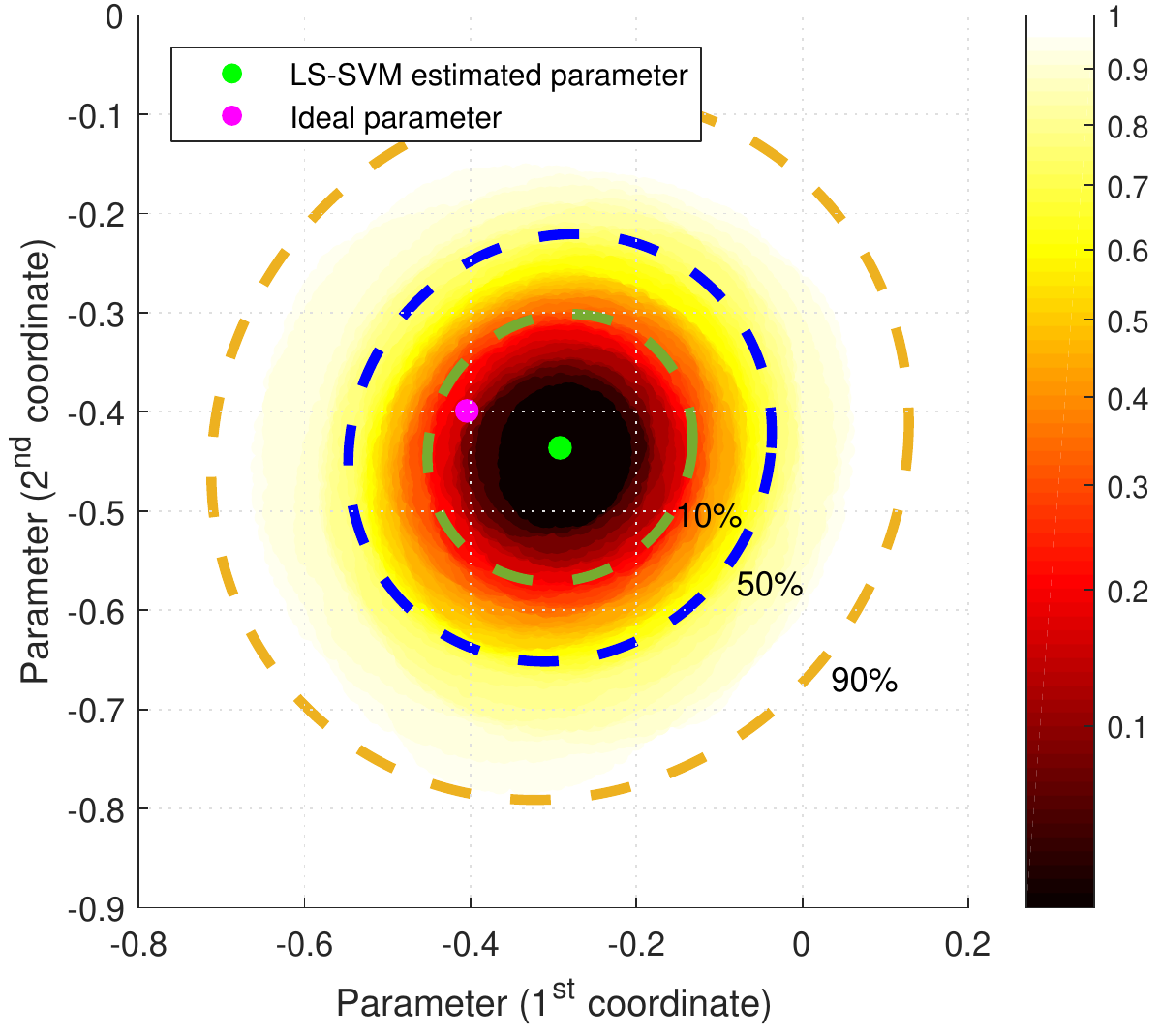}} 	
	\vspace*{1mm} 	
	\caption{Exact, non-asymptotic, distribution-free confidence regions for ideal RKHS representations. Parts (a) and (b) present UQ for Least-Squares Support Vector Classification (LS-SVC) with $\lambda = 0.1$ in the model and coefficient spaces, respectively. The ellipsoidal outer approximations of the regions having probabilities $10\,\%$, $50\,\%$ and $90\,\%$ are also presented in the coefficient space. 
	There were $n = 100$ observations, $50$ for each class. The centers of the classes were $(0, 0.5)$ and $(-0.5, 0)$. For each observation i.i.d.\ Laplace noises were added to the coordinates of the corresponding centers. The parameters of the noises were $\mu = 0$ (location) and $b = \nicefrac{1}{2}$ (scale). The confidence level of each color can be interpreted by using the scale bars. The regions are increasing, i.e., $A_p \subseteq A_q$ if $p \leq q$, thus, only the smallest levels are shown.}
\label{fig:LSsvm}
\end{figure}

\subsection{Uncertainty Quantification for Kernel Ridge Regression}
\label{special}
Our next example is Kernel Ridge Regression (KRR) which is a kernelized version of Tikhonov regularized LS \citep{shawe2004kernel}. 
The KRR estimate minimizes a quadratic loss function with a Hilbert space norm regularizer,\vspace{-1.5mm}
\begin{equation}
\hat{f}_{\scriptscriptstyle\text{KRR}} \; \in \; \argmin_{f \in \RKHS}\, \frac{1}{n}\,\sum_{i=1}^n w_i (y_i - f(x_i))^2 \,+\, \lambda\, \| f \|^2_{\RKHS},
\vspace{1mm}
\end{equation}
where $\lambda > 0$, $w_i > 0$, $i=1, \dots, n$, are some a priori given (constant) weights. After using the representer theorem, the objective function can be rewritten as
\begin{align}
\label{krr-obj}
g(f_{\alpha}, \Dt) \, \doteq & \,\,\, \frac{1}{n}\,\sum_{i=1}^n w_i (y_i - f_{\alpha}(x_i))^2 \,+\, \lambda\, \| f \|^2_{\RKHS} \, \nonumber \\
= & \,\,\, \frac{1}{n}\,\|\, y - f_{\alpha}(x) \,\|^2_{W} \,+\, \lambda\, \| f \|^2_{\RKHS} \, \nonumber \\[2mm]
= &\,\, \, \frac{1}{n}\,(y - \Ker\hspace{0.2mm} \alpha)\tr W (y - \Ker\hspace{0.2mm} \alpha) \,+\, \lambda\, \alpha\tr \Ker\hspace{0.2mm} \alpha,
\end{align}
where $f_{\alpha}(x) \doteq [\,f_{\alpha}(x_1), \dots, f_{\alpha}(x_n)  ]\tr$, $W \doteq \mbox{diag}(w_1,\dots, w_n)$,
and we used the reproducing property to replace the Hilbert space norm with a quadratic term.
We can reformulate \eqref{krr-obj} in the canonical form, $\|\,z \,-\, \Reg\hspace{0.2mm} \alpha \,\|^2$, by using
\begin{equation}
\Reg\; =\; \left[ 
\begin{array}{c}
\;(\nicefrac{1}{\sqrt{n}})\,W^{\frac{1}{2}} \Ker\; \\[1mm]
\sqrt{\lambda}\, \Ker^{\frac{1}{2}}
\end{array}  
\right]\!,\qquad\mbox{and}\qquad
z \;=\; \left[ 
\begin{array}{c}\;
(\nicefrac{1}{\sqrt{n}})\, W^{\frac{1}{2}} y\; \\[1mm]
\;0_n\;
\end{array}  
\right]\!,
\label{eq:KRRreg}
\end{equation}
where $W^{\frac{1}{2}}$ and $\Ker^{\frac{1}{2}}$ denote the square roots of matrices  $W$ and $\Ker$, respectively. Note that the square roots exist as these matrices are positive semidefinite.

Then, assuming symmetric and independent measurement noises, formula \eqref{spseval}, with regressor matrix and output vector defined by \eqref{eq:KRRreg}, can be applied to build confidence regions. As in the case of LS-SVM classifier, the canonical reformulation also contains some auxiliary terms, the zero part of $z$, for which there are no real noise terms, therefore, they should not be perturbed. Thus, we should again use the transformations defined by \eqref{eq:LSSVMtrsG} to get guaranteed confidence regions.

Experiments illustrating the family of (exact, non-asymptotic, distribution-free) confidence regions of KRR with Gaussian kernels and Laplacian measurement noises, and comparing the results with that of support vector regression, are shown in Figure \ref{fig:KRRvsSVM}. The discussion of the comparison is located in Section \ref{sec:SVR}.

\subsection{Uncertainty Quantification for Support Vector Regression}
\label{sec:SVR}
The previous examples were quadratic and therefore, for symmetric noises, their uncertainty could be quantified with SPS. This may be no more true if we change the applied norms. In this section we study support vector regression, particularly, $\varepsilon$-SVR \citep{hofmann2008kernel,scholkopf2001learning,steinwart2008support}. A well-known advantage of $\varepsilon$-SVR, for example, over KRR, is that it ensures sparse representations through the $\varepsilon$-insensitive loss function. In order to avoid confusion with the true noise vector, $\varepsilon$, we denote the tolerance parameter of the loss function by $\bar{\varepsilon}$. The primal objective function of $\varepsilon$-SVR is defined as\vspace{-1mm}
\begin{equation}
h({f},  \Dt ) \;\doteq \; \frac{1}{2} \, \|\, f \,\|^2_{\RKHS} \,+\, \frac{c}{n} \, \sum_{k=1}^{n}\, \max \{\, 0, |  \left< f, \phi(x_k) \right>_{\RKHS} - y_k \,| -\bar \varepsilon\,\},
\vspace{-1mm}
\label{primalsvr}
\end{equation}
where $f \in \RKHS$, $c>0$, and 
$\phi(z) \doteq k(z, \cdot)$ is the {\em feature map}.
Function \eqref{primalsvr} can be reformulated by applying slack variables, then using standard arguments based on the Lagrangian and the Karush--Kuhn--Tucker (KKT) conditions, we arrive at the Wolfe dual of $\varepsilon$-SVR \citep{scholkopf2001learning}, where we have to maximize
\begin{equation*}
g({f}_{\alpha^+, \alpha^-}, \Dt ) \,= \, y \tr (\alpha^+ - \alpha^-)  \,- 
\vspace{-3mm}
\end{equation*}
\begin{equation}
\label{esvrdual}
-\,\frac{1}{2} (\alpha^+ - \alpha^-)  \tr \Ker \, (\alpha^+ - \alpha^-)  - \bar{\varepsilon}\, (\alpha^+ + \alpha^-)\tr \one,
\end{equation}
subject to the 
(linear) constraints: $\alpha^+, \alpha^- \, \in \, [\,0, \nicefrac{c}{n}\,]^n$ and $(\alpha^+ - \alpha^-) \tr \mathbbm{1} \, = \,0$.
One can work directly with the quadratic dual objective, but then the confidence region will be constructed for $\alpha^+, \alpha^-$. Since, $\alpha = \alpha^+ - \alpha^-$, the region could be mapped to a confidence region in the space of coefficient vectors. Alternatively, one can reformulate \eqref{esvrdual} directly for coefficient vector $\alpha$ as\vspace{-1.5mm}
\begin{equation}
\label{esvrl1}
g({f}_{\alpha}, \Dt ) \;= \; y \tr \alpha - \frac{1}{2}\,\alpha \tr \Ker \hspace{0.2mm} \alpha - \bar{\varepsilon}\, \| \alpha \|_1,
\end{equation}
where $\| \cdot \|_1$ is the $1$-norm. 
A subgradient of \eqref{esvrl1} w.r.t.\ $\alpha$ is given by
\begin{equation}
\nabla_{\!\alpha}\, g({f}_{\alpha}, \Dt ) \; = \; y\, - \,\Ker \hspace{0.2mm} \alpha\, -\, \bar{\varepsilon}\,\mbox{sign}(\alpha),
\label{eq:SVRsubgrad}
\end{equation}
where $\mbox{sign}(\cdot)$ denotes the signum function and it is understood 
component-wise.

\begin{figure*}[!t]
    \centering
	\subfigure[UQ for KRR ($\lambda = 0.1$), SPS]{\label{fig:Assymptotic}\includegraphics[height=54.7mm]{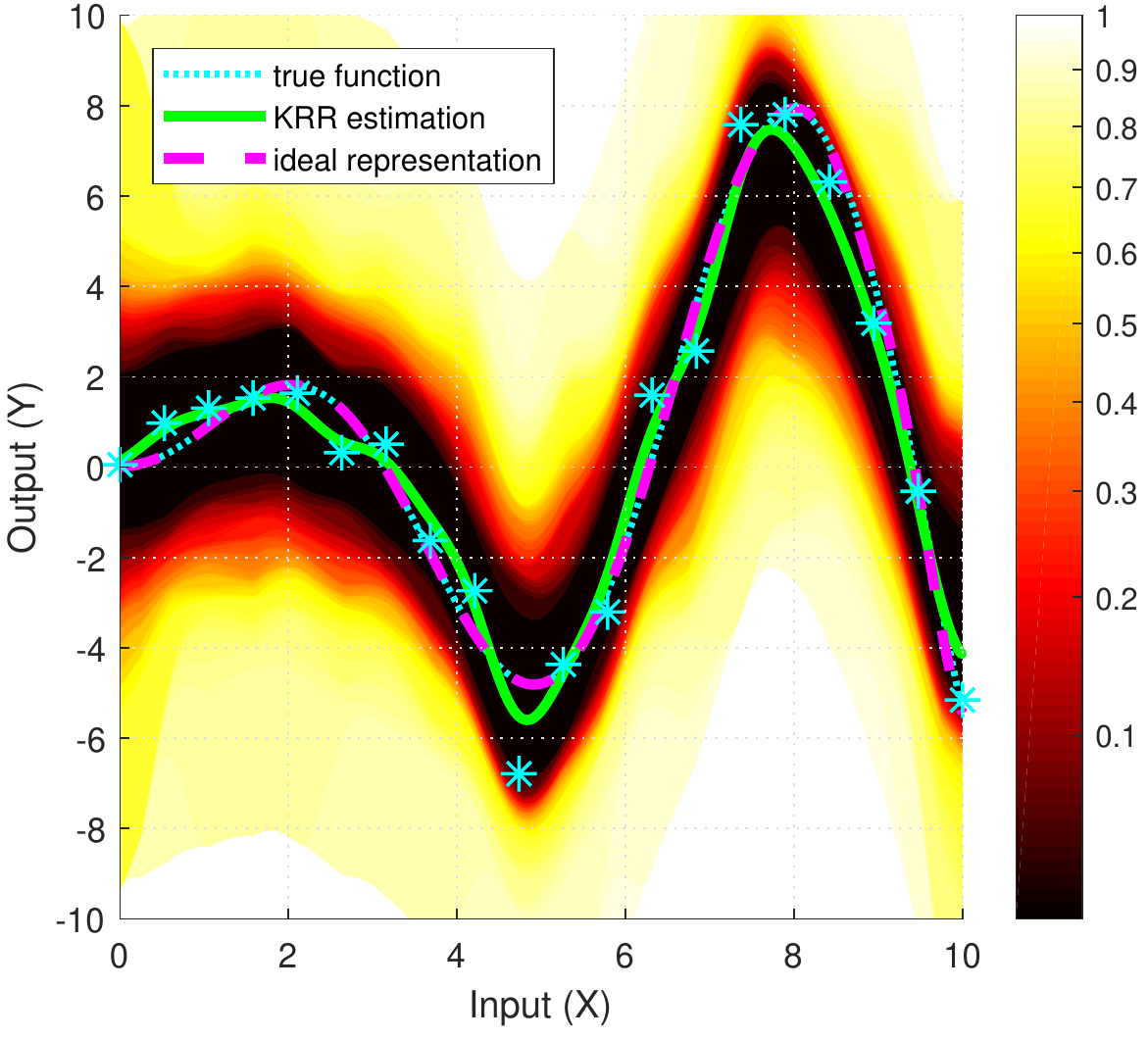}} 	
 	\subfigure[UQ for $\varepsilon$-SVR ($\bar \varepsilon = 0.2$), sign-changes]{\label{fig:DPUniform}\includegraphics[height=54.7mm]{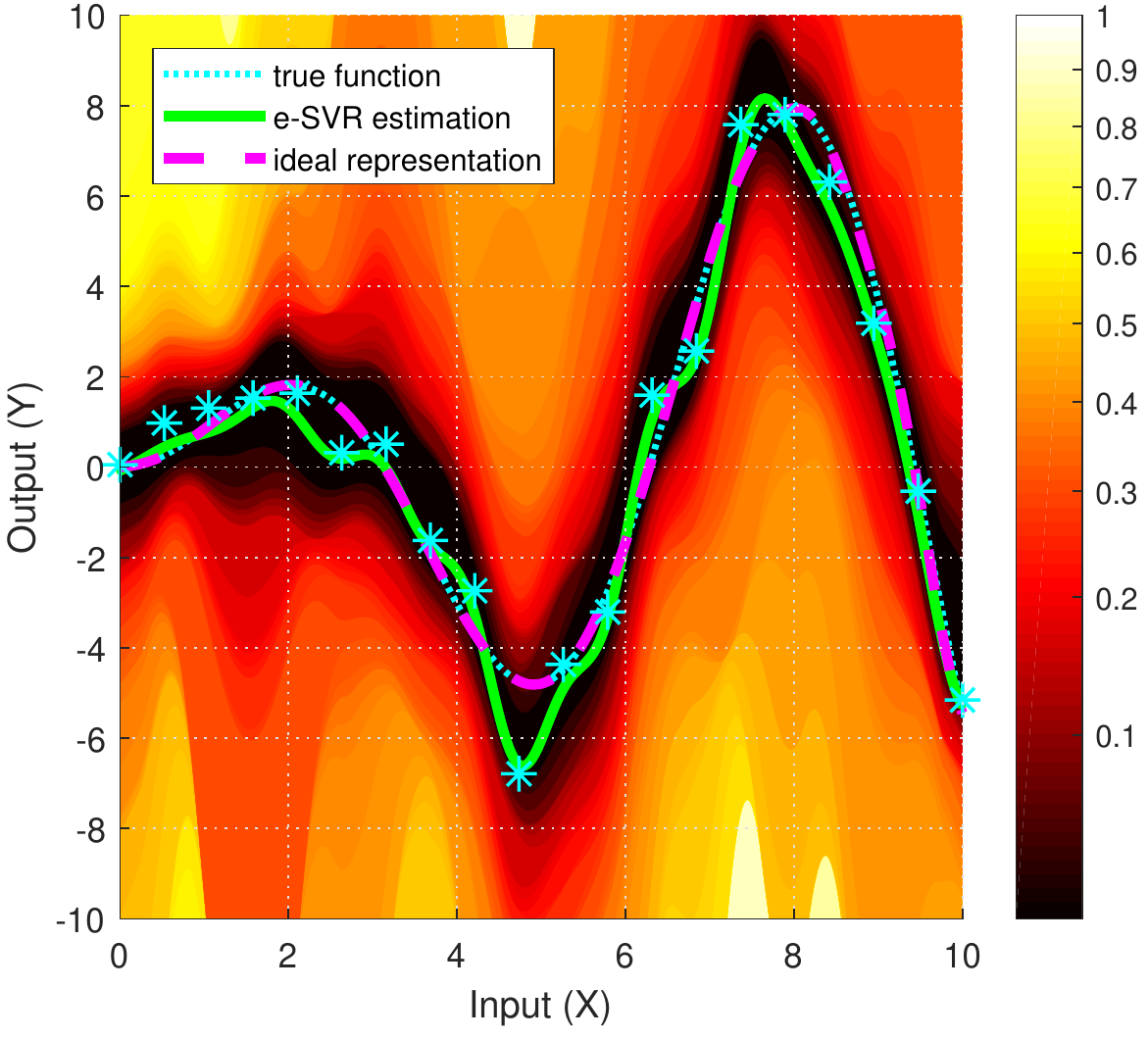}} 	
	\vspace*{1mm} 	
	\caption{Exact, non-asymptotic, distribution-free confidence regions for ideal RKHS representations. Parts (a) and (b) show UQ for Kernel Ridge Regression (KRR) with $\lambda = 0.1$ and $\varepsilon$-Support Vector Regression ($\varepsilon$-SVR) with $c = 250$ and $\bar \varepsilon = 0.2$, respectively. The same data  was used for both regression problems, namely, the true function was $f_{*}(x) = x\, \sin(x)$, there were $n = 20$ observations having i.i.d.\ Laplace noise with parameters $\mu = 0$ (location) and $b = \nicefrac{1}{2}$ (scale), and Gaussian kernels were applied with $\sigma = \nicefrac{1}{2}$. Part (a) was built by the Sign-Perturbed Sums (SPS) method, \eqref{spseval}, and formula \eqref{esvr-eval} was used with sign-change matrices for part (b). The confidence level of each color can be interpreted by using the scale bars. The regions are increasing, i.e., $A_p \subseteq A_q$ if $p \leq q$, thus, only the smallest levels are shown.}
\label{fig:KRRvsSVM}
\end{figure*}

Then, building on the subgradient of the dual objective, i.e., \eqref{eq:SVRsubgrad},
reference and perturbed evaluation functions can be defined, for $i=0, \dots, m-1$, as 
\begin{equation}
\label{esvr-eval}
Z_i(\alpha) \, \doteq \,  \left\| \,  G_i\,(\hspace{0.2mm}y - \Ker \hspace{0.2mm} \alpha\hspace{0.2mm}) \,-\, \bar{\varepsilon}\,\mbox{sign}(\alpha) \,\right\|^2,
\end{equation}
where $G_0$ is the identity matrix and $G_i$ is a (uniformly chosen) element of the 
applied compact transformation group, such as a diagonal matrix with $\pm 1$ entries, for symmetric noises (or permutation matrices for exchangeable noises, etc.). 

A numerical experiment illustrating the obtained family of confidence regions of the $\varepsilon$-SVR estimate for various significance levels is shown in Figure \ref{fig:KRRvsSVM}.

The same data sample was used for all regression models, to allow their comparison. The noise affecting the observations was Laplacian, thus heavy-tailed.
Since 
the coefficient space is high-dimensional, and there is a one-to-one correspondence between coefficient vectors and kernel models, the confidence regions are mapped and shown in the model space, i.e., in the space of RKHS functions. 

Note that it is meaningful to plot the confidence regions even for unknown input values, because the confidence regions are built for the ideal representation, which belongs to the chosen RKHS, unlike the underlying true function.  

We can observe that the uncertainty of $\varepsilon$-SVR was higher than that of KRR,
which can be 
explained as the price of using $\varepsilon$-insensitive loss. As the experiments with KLASSO show (cf.\ Figure \ref{fig:KLASSO}), the higher uncertainty of $\varepsilon$-SVR is not simply a consequence of sparse representations, as KLASSO also ensures sparsity.
Naturally, the confidence regions are also influenced by the specific choice of hyper-parameters which should be taken into account when 
the confidence regions are compared.

\subsection{Uncertainty Quantification for Kernelized LASSO}
Our last example covers
the LASSO (least absolute shrinkage and selection operator) method, which ensures sparsity via 1-norm regularization. Let us consider the kernelized version of LASSO with objective \citep{wang2007kernel}:
\begin{equation}
\label{klasso-obj}
g(f_{\alpha}, \Dt) \; \doteq \; \frac{1}{2} \, \|\, y - \Ker\hspace{0.2mm} \alpha \,\|^2 \,+ \,\lambda \, \|\,\alpha\,\|_1,
\end{equation}
were $\|\,\cdot\,\|_1$ is the L1 (or Manhattan) norm. Though, function \eqref{klasso-obj} cannot be written as $\|\,z \,-\, \Reg\hspace{0.2mm} \alpha \,\|^2$,  the proposed framework, i.e., construction \eqref{refpert}-\eqref{refpert2}, can still be applied. A sub-gradient of the KLASSO objective \eqref{klasso-obj} is given by
\begin{equation}
\label{klasso-grad}
\nabla_{\alpha}\, g(f_{\alpha}, \Dt) \, = \ \Ker (\Ker\hspace{0.2mm} \alpha - y) \,+\, \lambda\,\mbox{sign}(\alpha),
\end{equation}
where the $\mbox{sign}(\cdot)$ function is applied component-wise. Then, using the construction of  \eqref{refpert}-\eqref{refpert2}, the reference and perturbed functions can be defined as 
\begin{align}
\label{klasso-eval}
Z_0(\alpha) \; \doteq &\,  \left\| \,  \Ker\hspace{0.2mm}(\Ker\hspace{0.2mm} \alpha - y) \,+\, \lambda\,\mbox{sign}(\alpha) \,\right\|^2,\\[1.5mm]
Z_i(\alpha) \; \doteq &\,  \left\| \,  \Ker\hspace{0.2mm} G_i\,(\Ker\hspace{0.2mm} \alpha - y)\, +\, \lambda\,\mbox{sign}(\alpha) \,\right\|^2,
\end{align}
were $\{G_i\}$ are
from a suitable transformation group, e.g., 
diagonal matrices with Rademacher random variables as diagonal elements for symmetric noises. 

Numerical experiments illustrating the confidence regions we get for KLASSO are presented in Figure \ref{fig:KLASSO}. The figure also presents the confidence regions constructed by applying the standard Gaussian Process (GP) regression with estimated parameters. Note that the GP confidence regions are only approximate, namely, they do not come with strict finite-sample guarantees unless the noise is indeed Gaussian. Moreover, during our experiment the noise had a Laplace distribution, which has a heavier tail than Gaussians, therefore even if the true covariance of the noise was known, the confidence regions of GP regression would underestimate the uncertainty of the estimate (would be too optimistic), while the confidence regions of our framework are always non-conservative, independently of the particular distribution of the noise, assuming it has the necessary invariance. 

Also note that for our method the noises can even have different (marginal) distributions for each input.
Therefore, even though the confidence regions generated by GP are smaller than the ones our framework produces, the GP regions are imprecise and underestimate the uncertainty of the model, while ours come with strict finite-sample guarantees for a broad class of noises (e.g., symmetric ones).

\begin{figure*}[!t]
    \centering
 	\subfigure[UQ for KLASSO with Gaussian kernel]{\label{fig:DPUniform}\includegraphics[height=54.7mm]{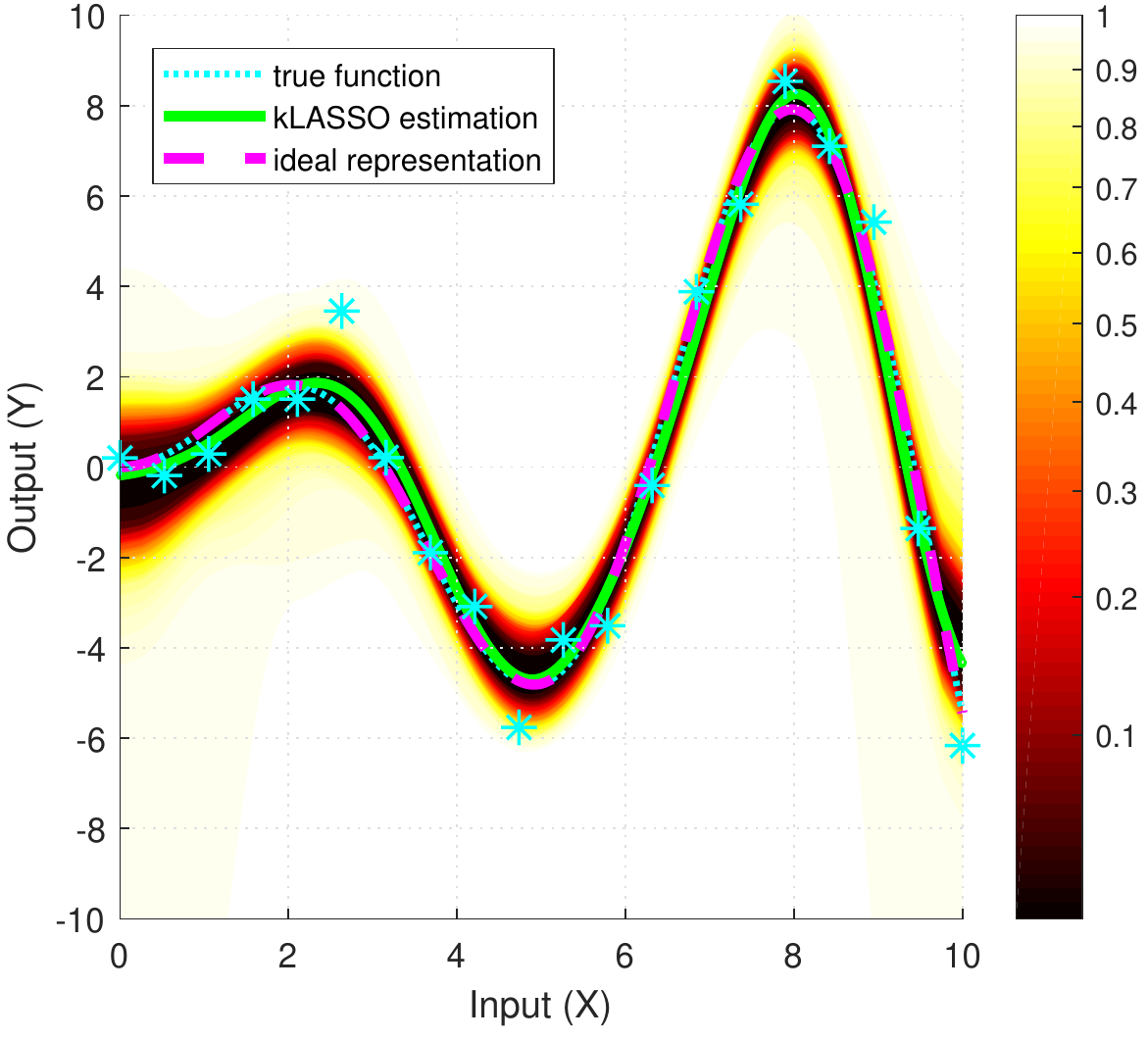}}
 	\subfigure[UQ with Gaussian Process Regression ]{\label{fig:DPUniform}\includegraphics[height=54.7mm]{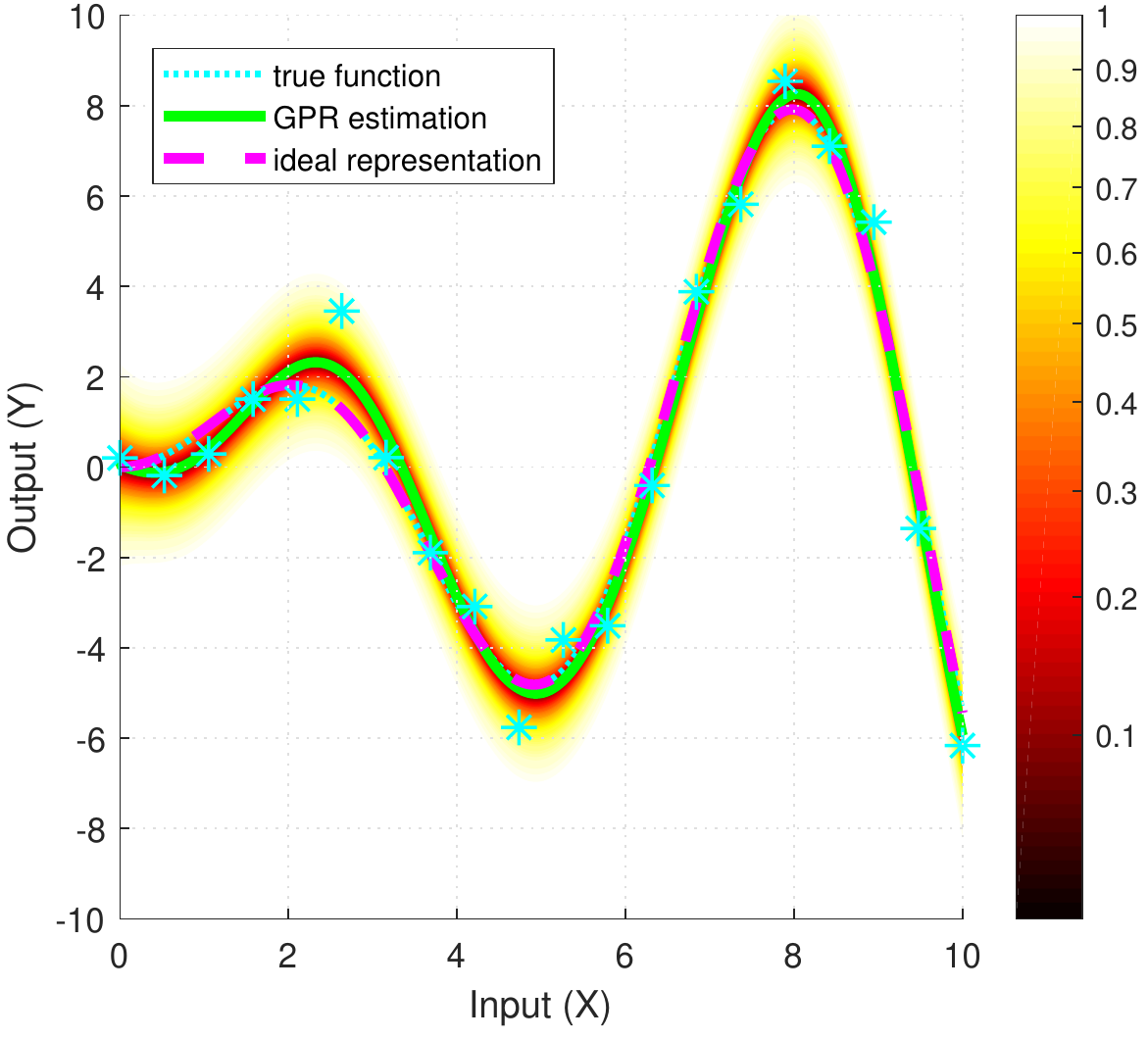}}
	\vspace*{1mm} 	
	\caption{Exact, non-asymptotic, distribution-free confidence regions for ideal RKHS representations obtained using our framework and approximate confidence regions obtained by Gaussian Process (GP) regression \citep{Rasmussen2006}. Part (a)  shows UQ for Kernelized LASSO with $\lambda = 1$, and part (b) shows UQ with GP. The applied transformations were sign-change matrices. The same data  was used for both regression problems, namely, the true function was $f_{*}(x) = x\, \sin(x)$, there were $n = 20$ observations having i.i.d.\ Laplace noise with parameters $\mu = 0$ (location) and $b = \nicefrac{1}{2}$ (scale), and Gaussian kernels were applied with $\sigma = 1$. The confidence level of each color can be interpreted by using the scale bars. The confidence regions are increasing, i.e., $A_p \subseteq A_q$ if $p \leq q$, therefore, only the smallest levels are shown.}
\label{fig:KLASSO}
\end{figure*}

\section{Conclusions}
In this paper we addressed the problem of quantifying the {\em uncertainty} of kernel estimates by using minimal distributional assumptions. 
The main aim was to measure the uncertainty of finding the (noise-free) {\em ideal representation} of the underlying (hidden) function at the available inputs. By building on recent developments in finite-sample system identification, we proposed a method that delivers {\em exact}, {\em distribution-free} confidence regions with strong {\em finite-sample guarantees}, based on the knowledge of some mild regularity of the measurement noises. The standard examples of such regularities are {\em exchangeable} or {\em symmetric} noise terms. Note that either of these properties in itself is sufficient for the theory to be applicable. 

The needed statistical assumptions are very mild, as for example, no particular (parametric) family of distributions was assumed, {\em no moment assumptions} were made (the noises can be heavy-tailed, and may even have infinite variances); moreover, for the case of symmetric noises, it is allowed that each noise term affecting the observations has a different distribution, i.e., the noise can be {\em nonstationary}.

The core idea of the approach is to evaluate the uncertainty of the estimate by {\em perturbing the residuals} in the {\em gradient} of the objective function. The norms of the (potentially weighted) perturbed gradients are then compared to that of the unperturbed one, and a {\em rank test} is applied for the construction of the region.

The proposed method was also demonstrated on {\em specific examples} of kernel methods. Particularly, we showed how to construct exact, non-asymptotic, distribution-free confidence regions for least-squares support vector classification, kernel ridge regression, support vector regression and kernelized LASSO.

Several {\em numerical experiments} were presented, as well, demonstrating that the method provides meaningful regions even for {\em heavy-tailed} (e.g., Laplacian) noises. The figures illustrate whole families of confidence regions for various standard kernel estimates. Ellipsoidal outer approximations are also shown for LS-SVC.
Additionally, the method was compared to Gaussian Process (GP) regression, and it was found that although the (approximate) GP confidence regions are smaller in general than our (exact) confidence sets, but the GP regions are typically imprecise and they underestimate the real uncertainty, e.g., if the noises are heavy-tailed.

Our approach to build non-asymptotic, distribution-free, non-conservative confidence regions for kernel methods can be a promising alternative to existing constructions, which arch-typically either build on strong distributional assumptions or on asymptotic theories or only bound the error between the true and empirical risks. As our approach explicitly builds on the constructions of the underlying kernel methods, it can provide {\em new insights} on how the specific methods influence the uncertainty of the estimates, and therefore, besides being vital for risk management, it also has the potential to inspire refinements or new constructions.

There are several open questions about the framework which can facilitate future research directions. For example, finding efficient {\em outer-approximations} for cases when the objective function is not convex quadratic should be addressed. 
Also the {\em consistency} of the method should be studied to see whether the uncertainty decreases as the sample size tends to infinity. 
Finally, it would be interesting, as well, to extend the method to (stochastic) {\em dynamical systems} and to formally analyze the {\em size and shape} of the constructed regions in a finite-sample setting.

\begin{acknowledgements}
This research was supported by the National Research, Development and Innovation Office (NKFIH), grant numbers ED\_18-2-2018-0006,  2018-1.2.1-NKP-00008 and KH\_17 125698.
The authors are grateful to Algo Car\`e for the valuable 
discussions.
\end{acknowledgements}

\section*{A. Additional Numerical Experiments}
In this appendix we provide additional numerical experiments supporting the presented framework. The effects of various {\em measurement noises}, {\em kernel functions} and {\em sample sizes} on the obtained (families of) exact, non-asymptotic, distribution-free confidence regions were studied. The true function was always $f_{*}(x) = x\, \sin(x)$ and the inputs were chosen equidistantly from $[\,0, 10\,]$. The regions were evaluated by the same methodology (Monte Carlo simulations) as in Section  \ref{sec-examples}.

\subsection*{A.1 Various Noise Distributions}
First, we investigated how the distribution of the noise affects the regions. Particularly, we applied {\em Gaussian}, {\em Laplacian}, {\em Uniform} and {\em Binomial} noises on the outputs of the true function and built the regions for Kernel Ridge Regression (KRR). All noises had zero mean (for the Binomial case the theoretical mean was subtracted from the generated noises), and the parameters of the distributions were set in a way to ensure that all of their variances were the same (i.e., one). 

Figure \ref{fig:KRR_Noises} illustrates the obtained families of confidence sets. It can be observed that their shapes and sizes show only small fluctuations indicating that the particular choice of the distribution has a limited effect on the confidence regions (assuming it has zero expectation and we keep the variance of the noise fixed).

\begin{figure*}[!t]
	\vspace*{3mm}
    \centering
	\subfigure[UQ for KRR, {\em Gaussian} noise]{\label{fig:n20}\includegraphics[height=54.7mm]{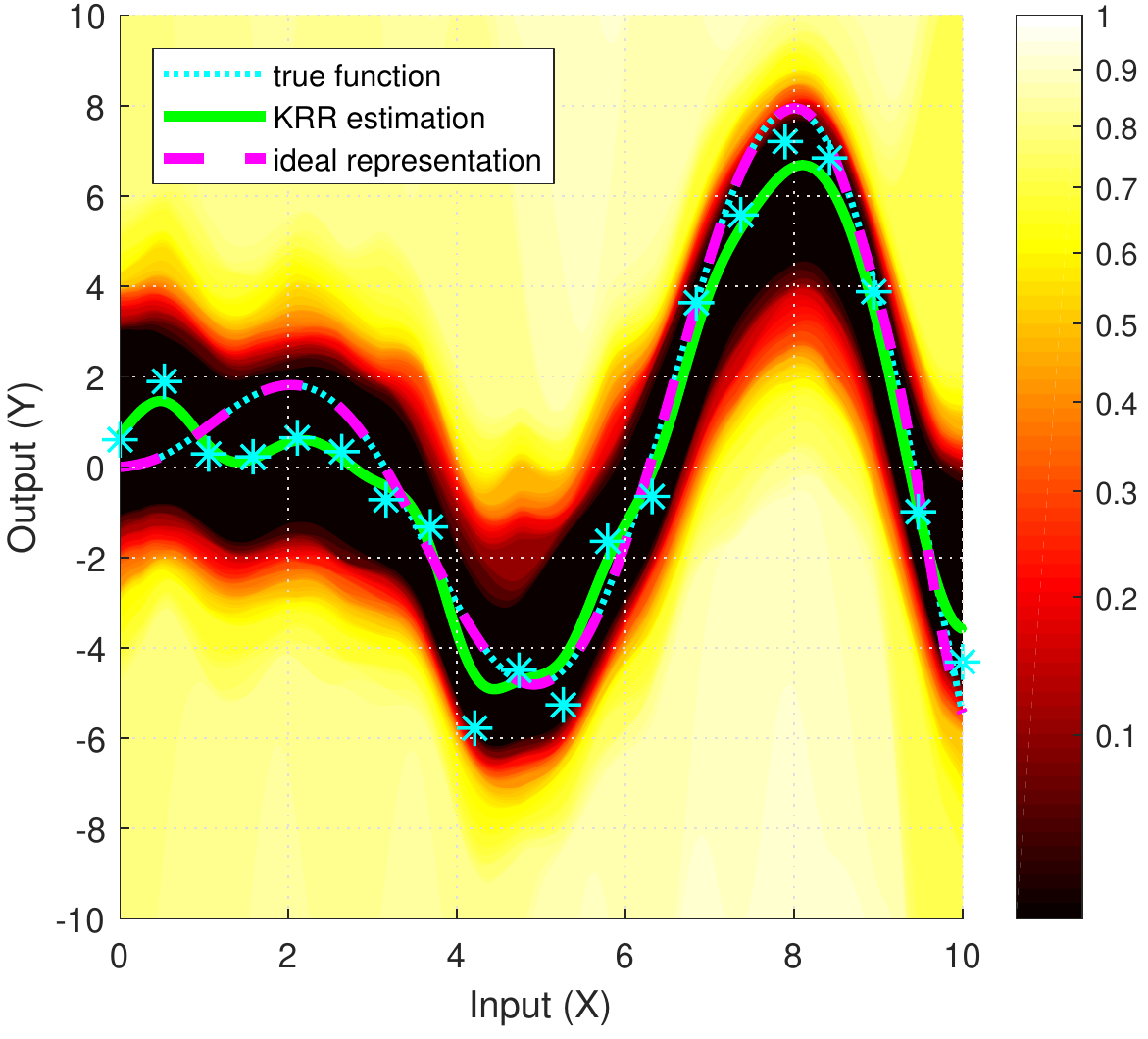}}
	\subfigure[UQ for KRR, {\em Laplace} noise]{\label{fig:n50}\includegraphics[height=54.7mm]{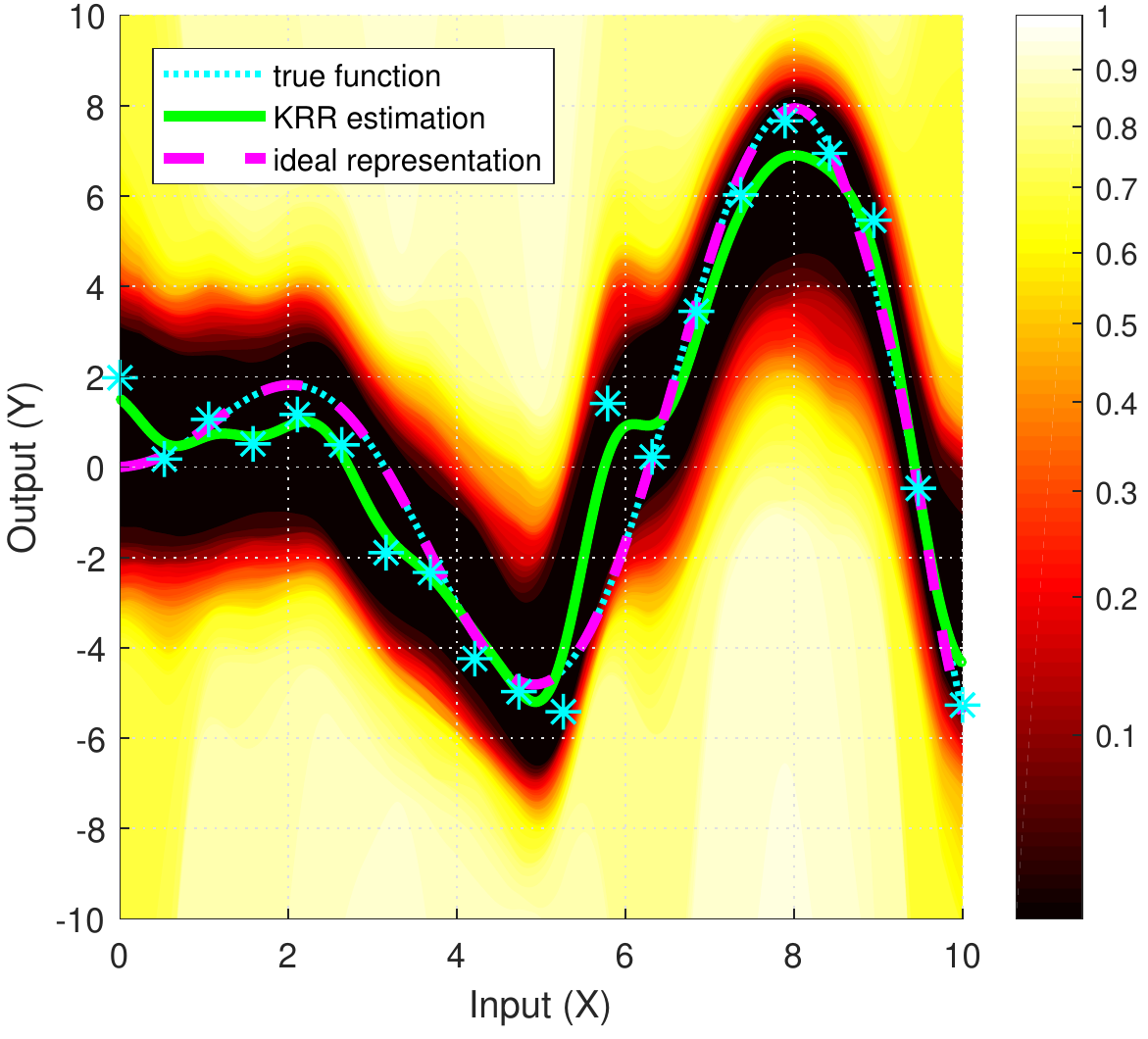}} 
	\vspace*{0mm}
	\subfigure[UQ for KRR, {\em Uniform} noise]{\label{fig:n100}\includegraphics[height=54.7mm]{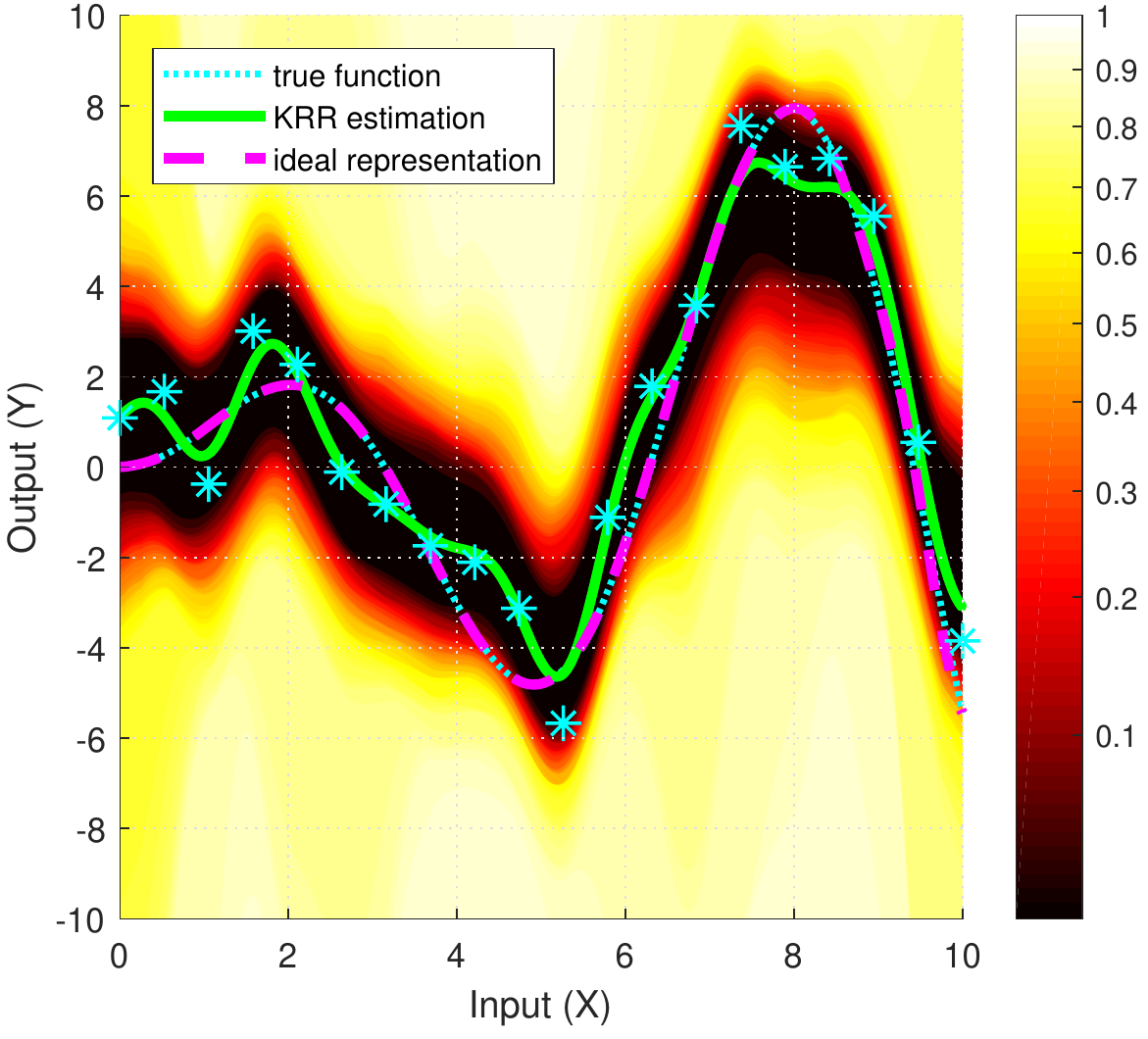}}
   	\subfigure[UQ for KRR, {\em Binomial} noise]{\label{fig:n10}\includegraphics[height=54.7mm]{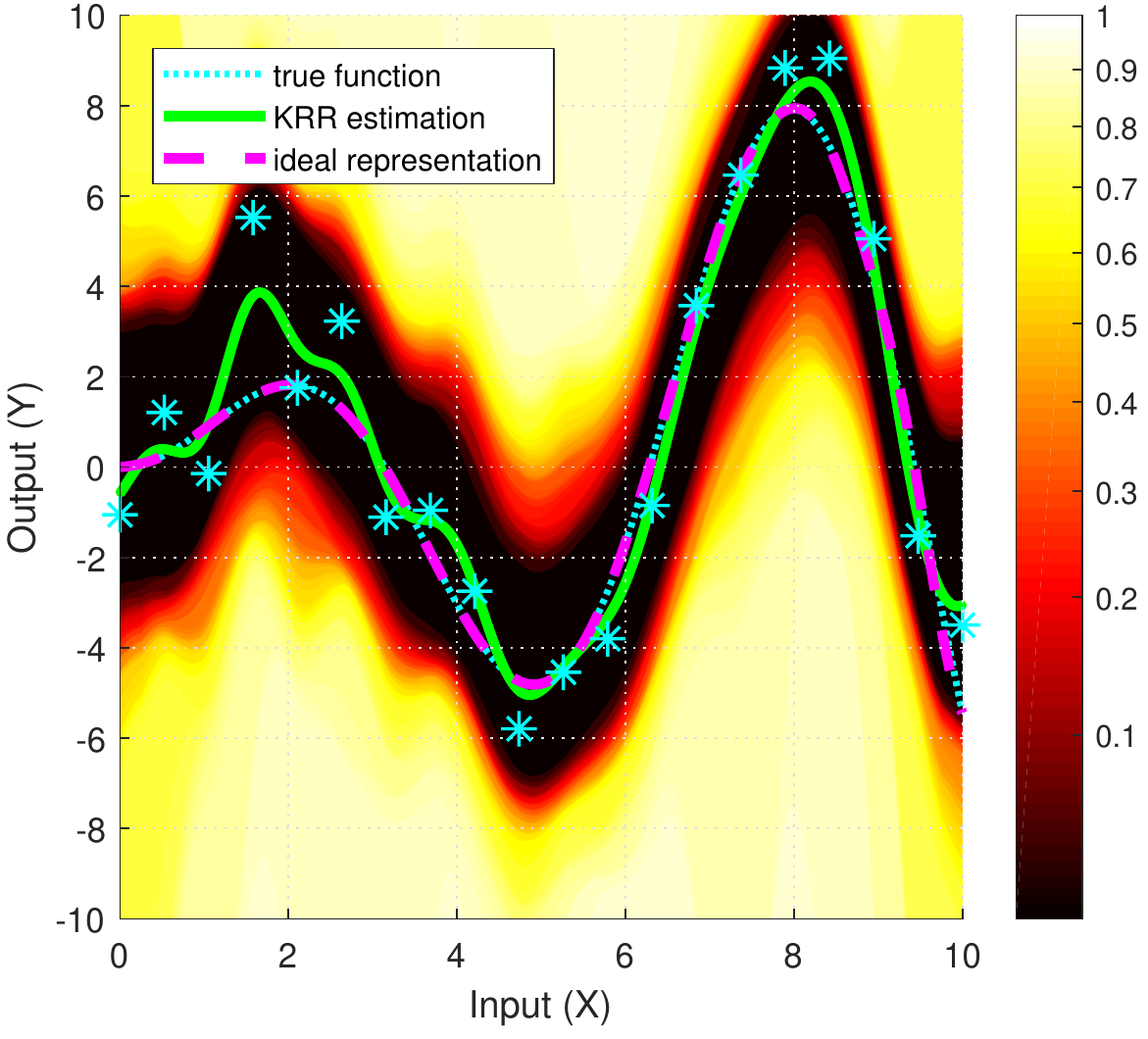}} 	
	\vspace*{1mm} 	
	\caption{Exact, non-asymptotic, distribution-free confidence regions for ideal representations w.r.t.\ {\em various noise distributions}. The figure shows UQ for Kernel Ridge Regression (KRR) with $\lambda = 0.1$ and Gaussian kernels with $\sigma = \nicefrac{1}{2}$. Parts (a), (b), (c) and (d) demonstrate the obtained family of confidence regions for i.i.d.\ Gaussian, Laplace, Uniform and Binomial noises, respectively. The parameters of all distributions were set to ensure that each of them has zero mean and unit variance. For the Binomial case, the ``number of trials'' parameter was $20$, and so the ``success probability'' $p$ was set to satisfy $20p(1-p) = 1$ (thus, $p \approx 0.052786$). Then, from each Binomial observation $20p$ was subtracted to ensure zero mean. 	In all cases  $n = 20$ outputs were measured at equidistant inputs. The Sign-Perturbed Sums (SPS) method was applied to construct the regions, hence, the applied transformations were sign-changes. The confidence levels can be interpreted by using the scale bars. The regions are increasing, i.e., $A_p \subseteq A_q$ if $p \leq q$, therefore, only the smallest levels are shown.}
\label{fig:KRR_Noises}
\end{figure*}

\begin{figure*}[!t]
	\vspace*{3mm}
    \centering
   	\subfigure[UQ for kLASSO, {\em Gaussian} kernel]{\label{fig:n10}\includegraphics[height=54.7mm]{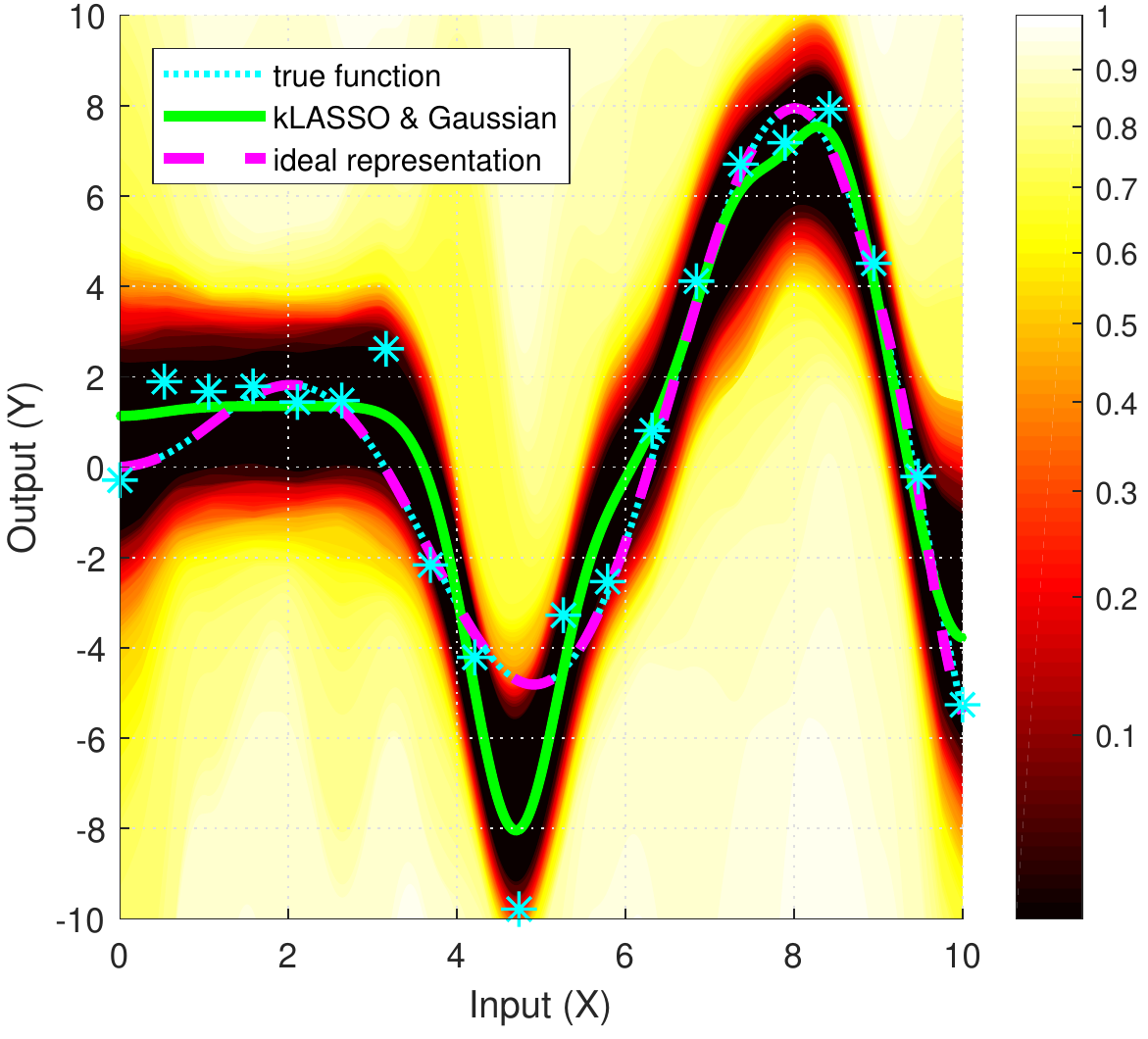}} 	
	\subfigure[UQ for kLASSO, {\em Laplacian} kernel]{\label{fig:n20}\includegraphics[height=54.7mm]{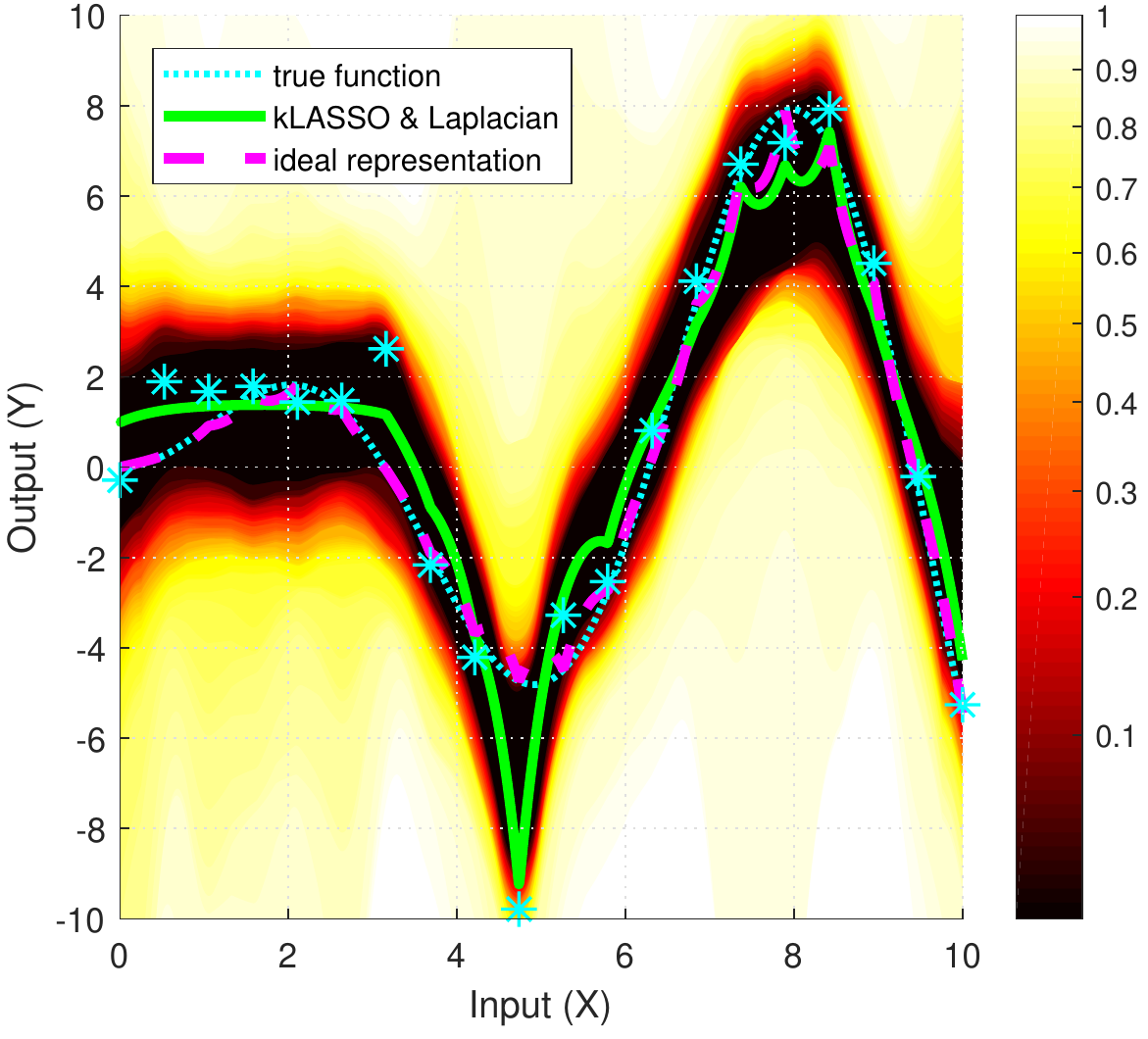}} 	
		\vspace*{0mm}
	\subfigure[UQ for kLASSO, {\em Parabolic} kernel]{\label{fig:n50}\includegraphics[height=54.7mm]{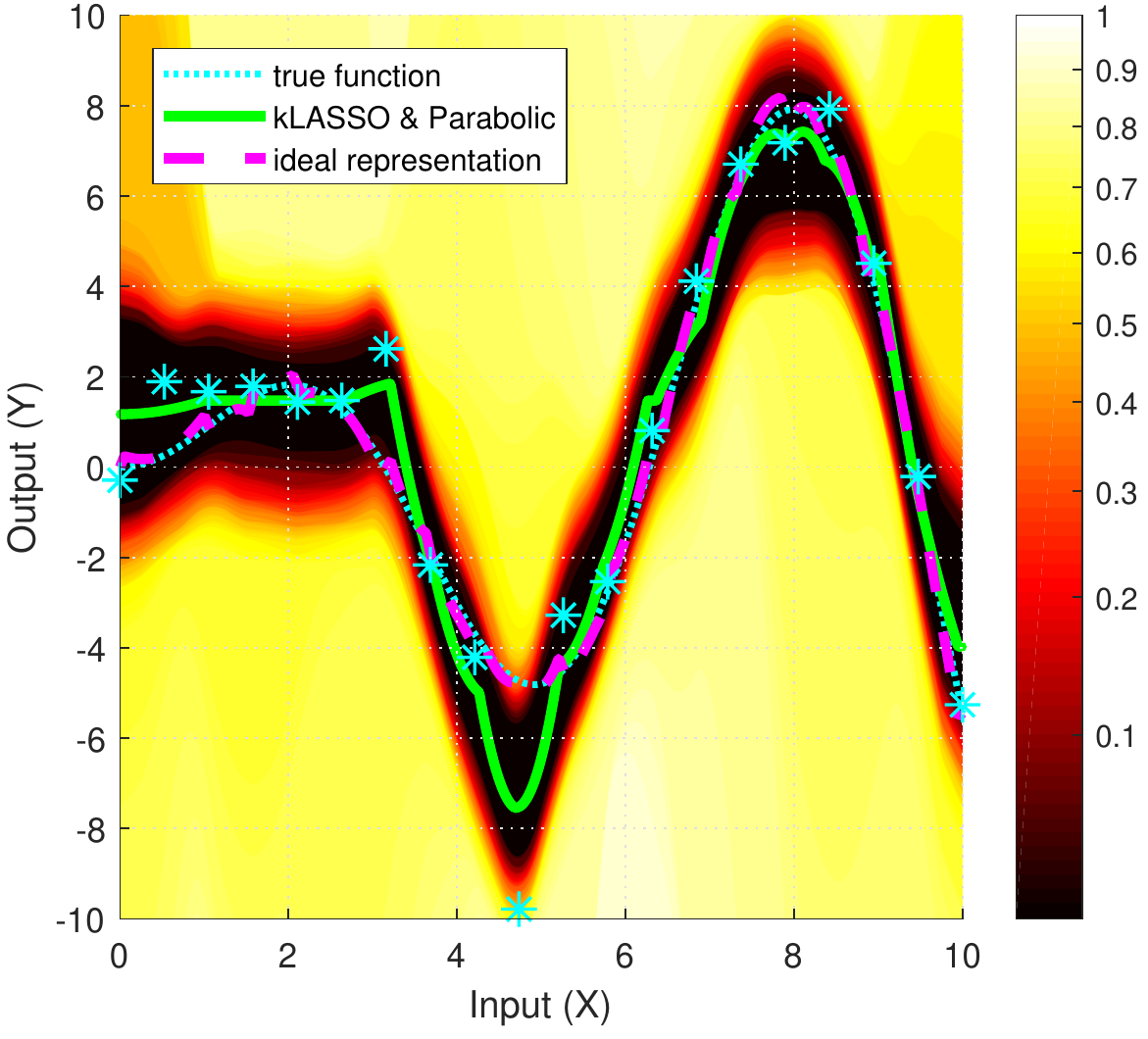}} 	
	\subfigure[UQ for kLASSO, {\em Rectangular} kernel]{\label{fig:n100}\includegraphics[height=54.7mm]{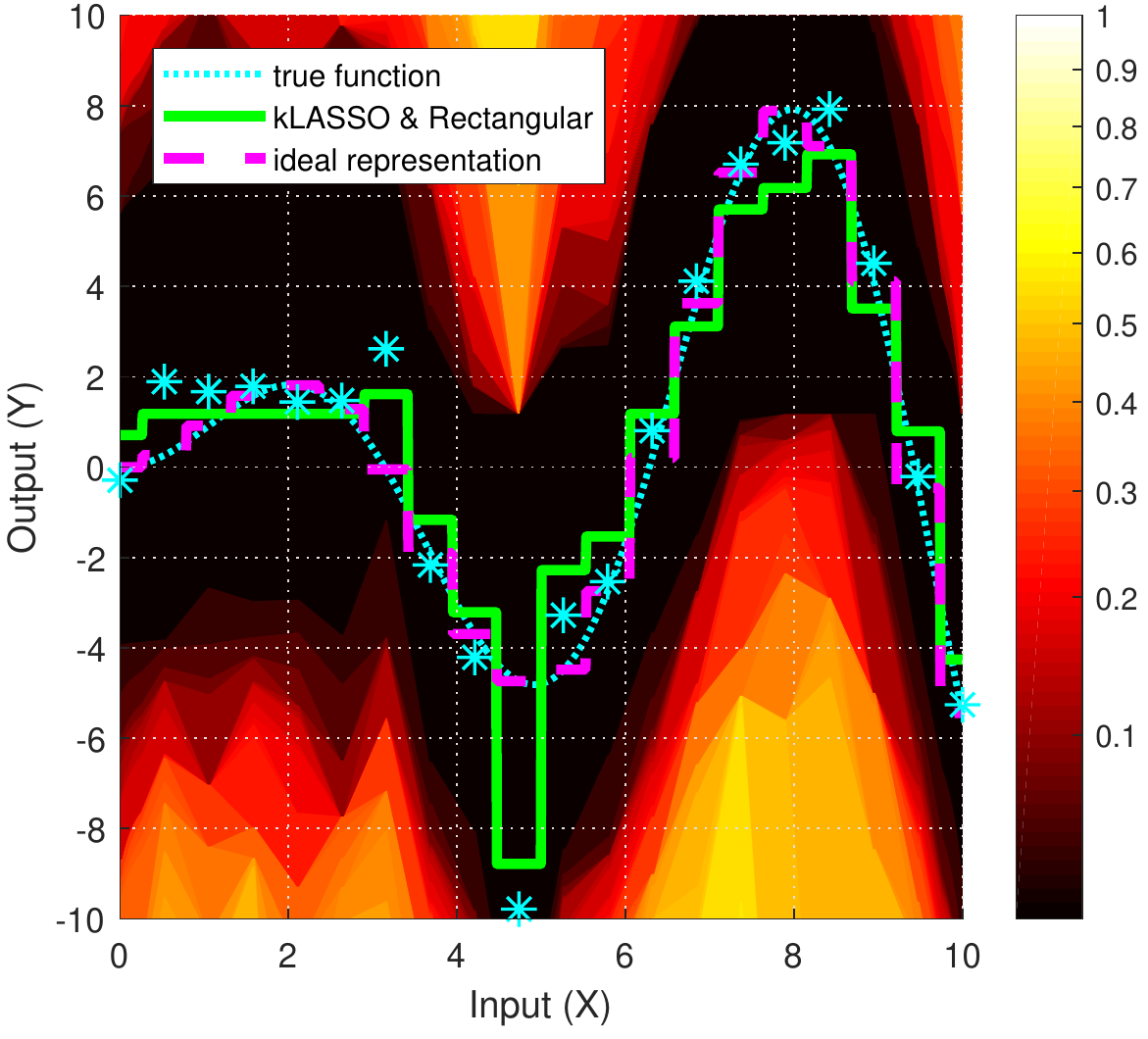}}
	\vspace*{1mm} 	
	\caption{Exact, non-asymptotic, distribution-free confidence regions for ideal representations w.r.t.\ {\em different kernel functions}. The figure shows UQ for kernelized LASSO with $\lambda = 1$. There were $n = 20$ observations having i.i.d.\ Laplace noises with parameters $\mu = 0$ (location) and $b = \nicefrac{1}{2}$ (scale). Parts (a), (b), (c) and (d) demonstrate the obtained family of confidence regions when using Gaussian, Laplacian, truncated parabolic and rectangular kernels, respectively. For the Gaussian and Laplacian kernels $\sigma = \nicefrac{1}{2}$, for the truncated parabolic kernel $c = 1$, and for the rectangular kernel $c = \nicefrac{1}{38}$. The same data  was used for all regression problems, and the applied transformations were sign-changes. Observe that the Laplacian kernel was more sensitive to the outlier between $4$ and $5$. The obtained regions for the rectangular kernel are much larger than the other regions, indicating a high uncertainty of such an overly localized approach. The confidence levels can be interpreted by using the scale bars. The regions are increasing, i.e., $A_p \subseteq A_q$ if $p \leq q$, therefore, only the smallest levels are shown.}	
\label{fig:kLASSO_Kernel} 
\end{figure*}

\subsection*{A.2 Different Kernel Functions}
Next, the effect of the applied kernel was studied. Figure \ref{fig:kLASSO_Kernel} illustrates UQ for kernelized LASSO with {\em Gaussian}, {\em Laplacian}, {\em truncated parabolic} ($k(x,y) = \max\{1-c\hspace{0.3mm} \|x-y\|^2,\, 0\}$) and {\em rectangular kernels} ($k(x,y) = \mathbb{I}(\|x - y\| \leq c$), where the noises were Laplacian. The results show that the choice of the kernel has a significant effect on both the obtained point-estimate (regression model) and the corresponding confidence sets, e.g., the Laplacian kernel was more sensitive to outliers and the regions for the rectangular kernel were much larger than the other ones.

\begin{figure*}[!t]
	\vspace*{3mm}
    \centering
   	\subfigure[UQ for kLASSO, $n = 10$]{\label{fig:n10}\includegraphics[height=54.7mm]{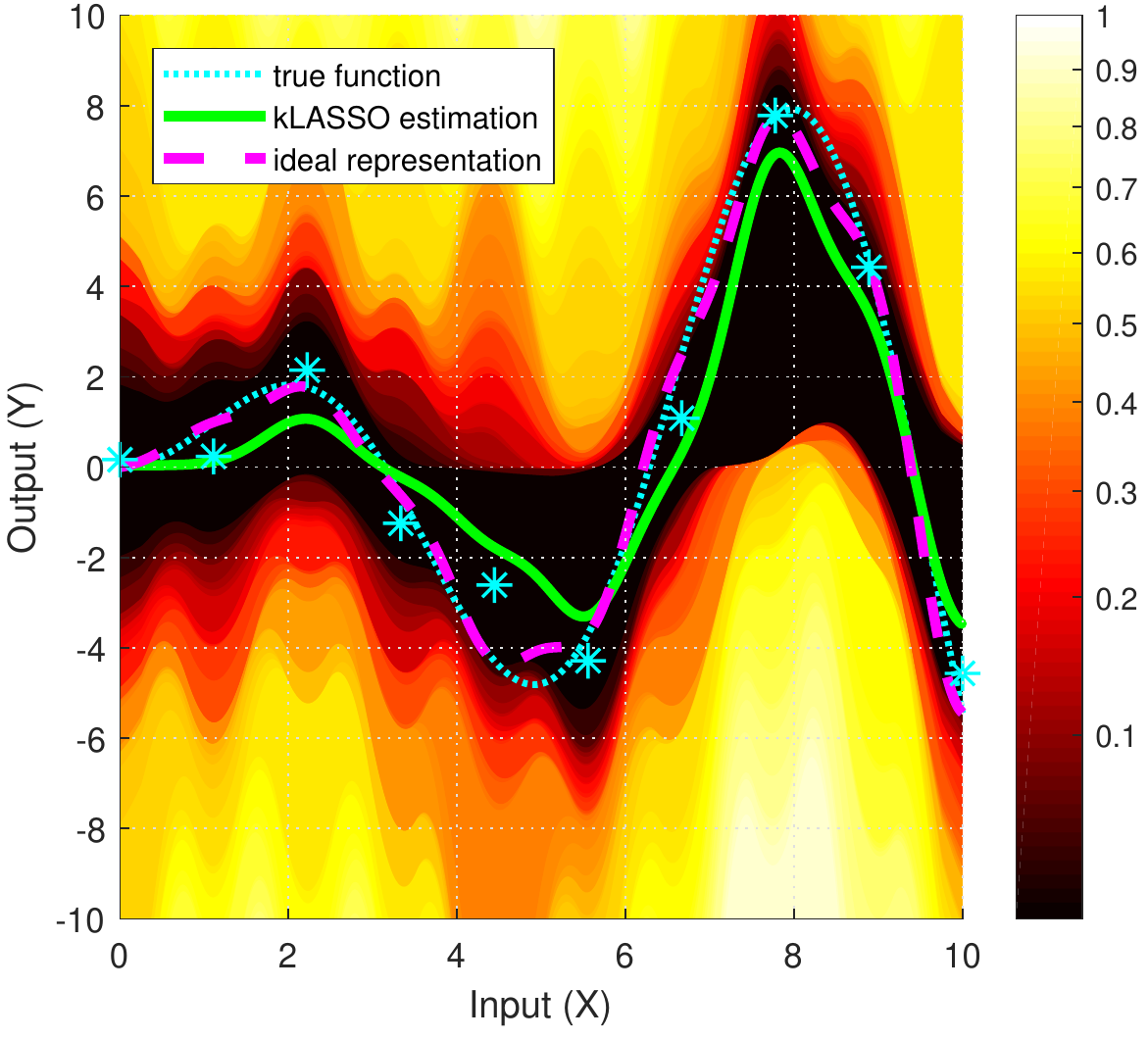}} 	
	\subfigure[UQ for kLASSO, $n = 20$]{\label{fig:n20}\includegraphics[height=54.7mm]{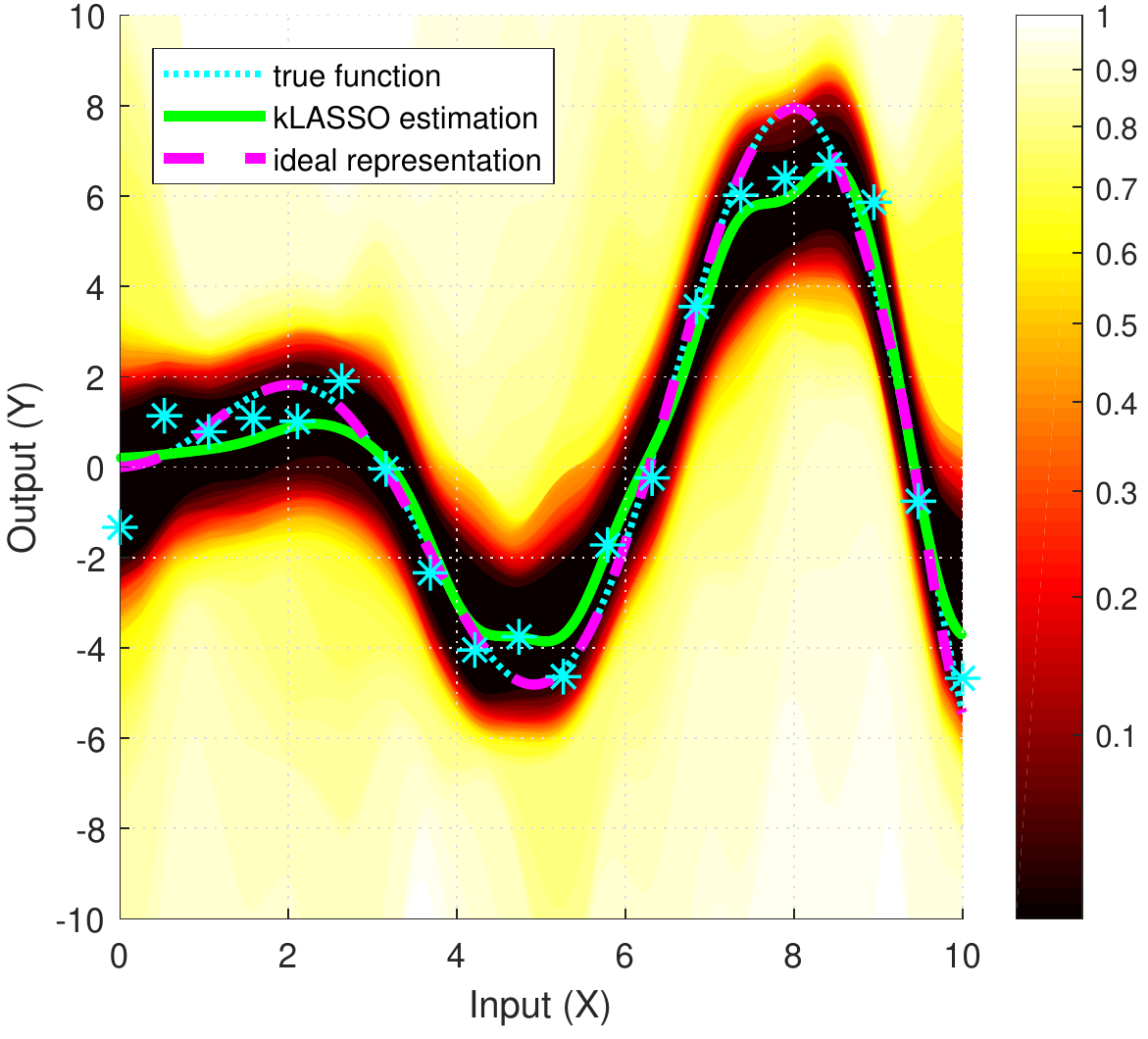}} 	
    \vspace*{0mm}
	\subfigure[UQ for kLASSO, $n = 50$]{\label{fig:n50}\includegraphics[height=54.7mm]{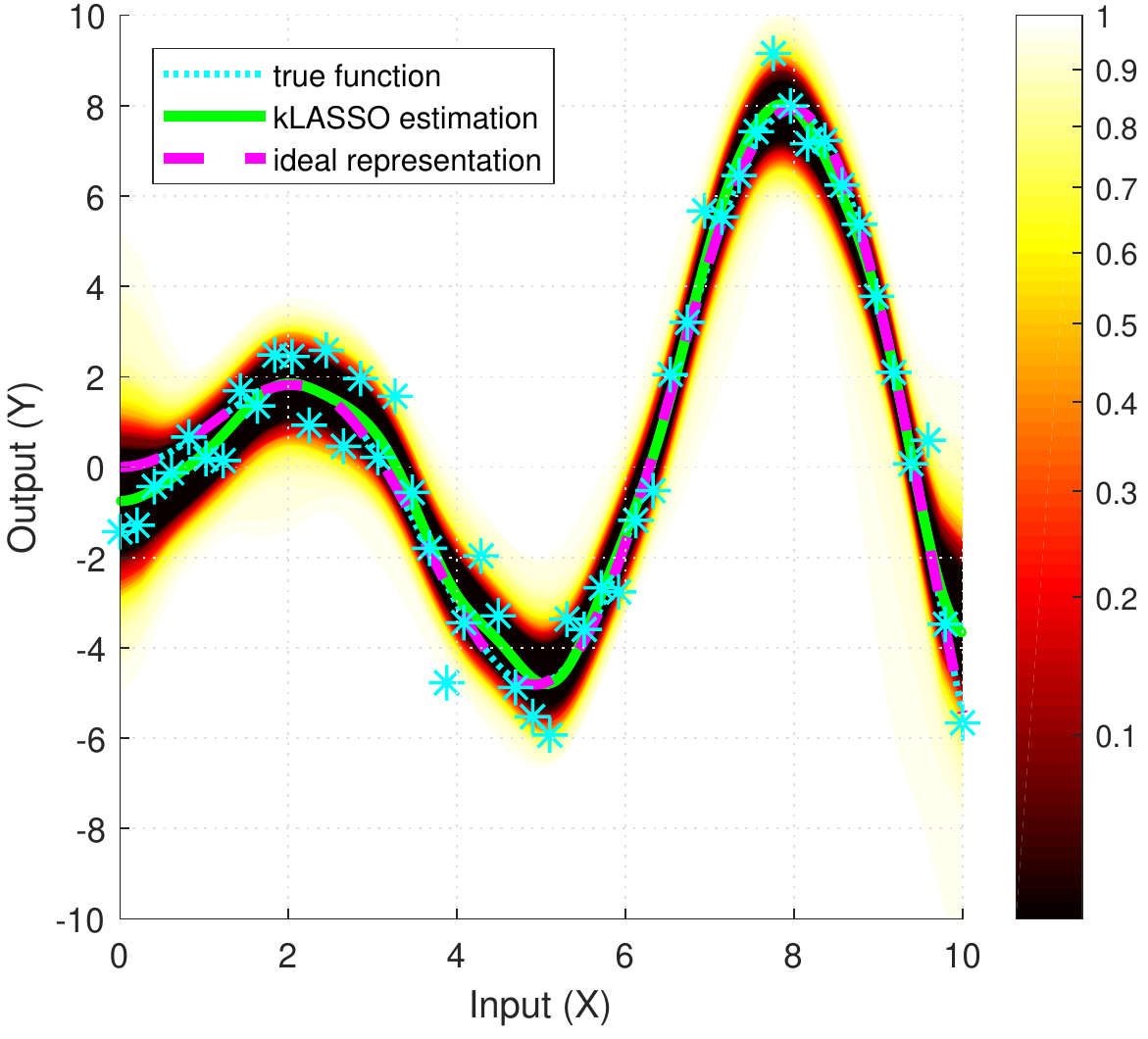}} 	
	\subfigure[UQ for kLASSO, $n = 100$]{\label{fig:n100}\includegraphics[height=54.7mm]{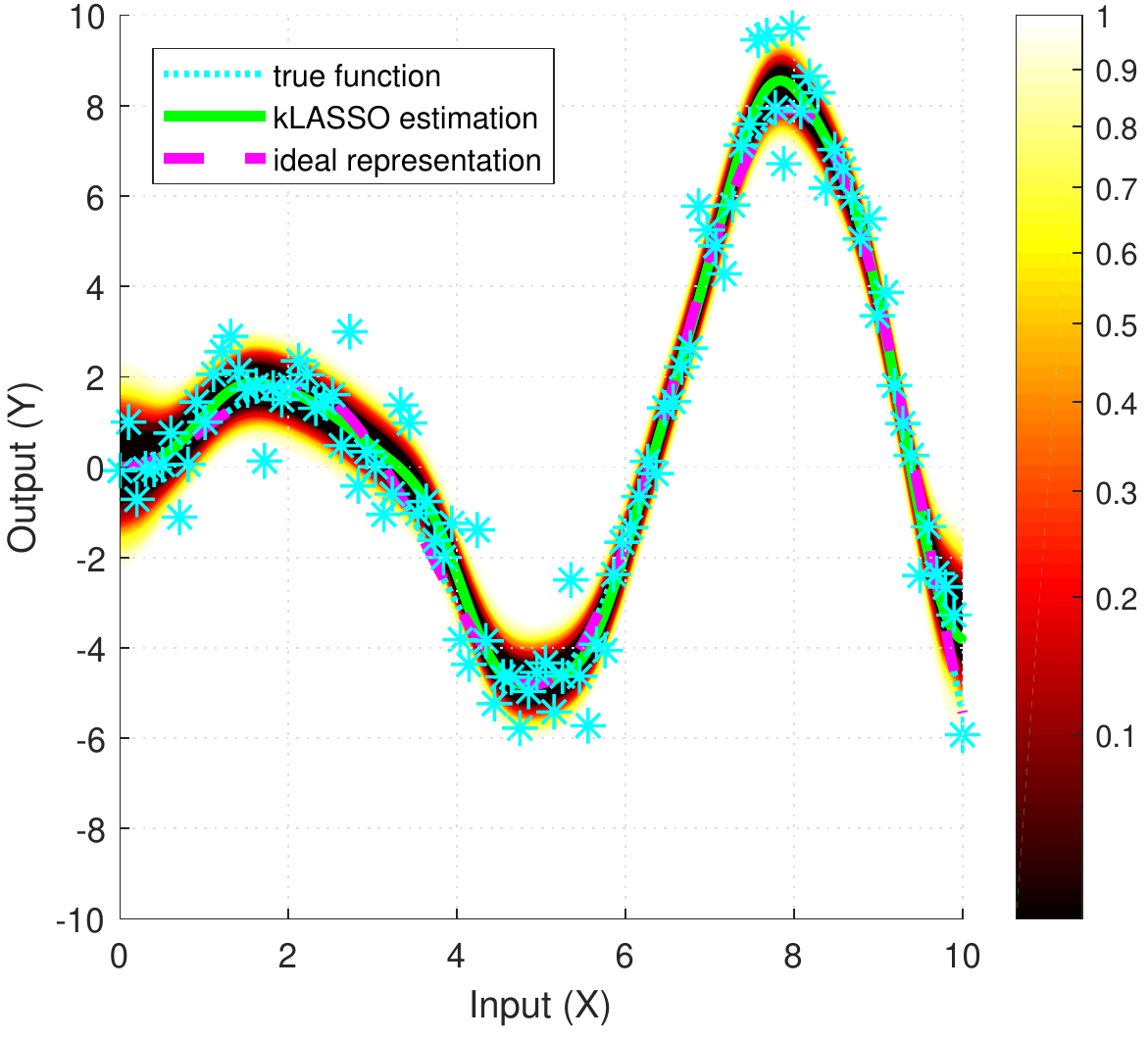}}
	\vspace*{1mm} 	
	\caption{Exact, non-asymptotic, distribution-free confidence regions for ideal representations w.r.t.\ {\em increasing sample sizes}. The figure shows UQ for kernelized LASSO with $\lambda = 1$ and using Gaussian kernels with $\sigma = \nicefrac{1}{2}$.	 The observations had i.i.d.\ Laplace noises with parameters $\mu = 0$ (location) and $b = \nicefrac{1}{2}$ (scale). Parts (a), (b), (c) and (d) demonstrate the obtained family of confidence regions when using samples of size $n = 10, 20, 50,$ and $100$, respectively. The applied transformations were sign-changes. Observe that the confidence regions shrink around the ideal representations, despite the number of coefficients also increases with the sample size. This is indicative of the phenomenon that the regions have a consistency property. This may be especially true if we apply a universal kernel, such as the Gaussian one, for which the ideal representations can approximate arbitrary well any continuous functions on a compact domain. The confidence levels can be interpreted by using the scale bars. The regions are increasing, i.e., $A_p \subseteq A_q$ if $p \leq q$, therefore, only the smallest levels are shown.}
\label{fig:kLASSO_SampleSize}
\end{figure*}

\subsection*{A.3 Increasing the Sample Size}
Finally, we have experimented with kernelized LASSO to see how increasing of the sample size affects the obtained confidence regions. The measurement noises were Laplacian (hence heaviy-talied), and the applied {\em sample sizes} were $n=10, 20, 50,$ and  $100$. The results are shown in Figure \ref{fig:kLASSO_SampleSize} and are indicative of the phenomenon that the confidence regions, and hence the uncertainties, shrink as the sample size tends to infinity, even though the number of coefficients increases with the sample size. This experiment supports that the approach is ``consistent'',  nevertheless, we leave the theoretical investigation of this phenomenon for further study.

\bibliographystyle{spbasic}      %
\bibliography{kernel-sps}   %

\begin{thebibliography}{29}
\providecommand{\natexlab}[1]{#1}
\providecommand{\url}[1]{{#1}}
\providecommand{\urlprefix}{URL }
\expandafter\ifx\csname urlstyle\endcsname\relax
  \providecommand{\doi}[1]{DOI~\discretionary{}{}{}#1}\else
  \providecommand{\doi}{DOI~\discretionary{}{}{}\begingroup
  \urlstyle{rm}\Url}\fi
\providecommand{\eprint}[2][]{\url{#2}}

\bibitem[{Argyriou and Dinuzzo(2014)}]{argyriou2014unifying}
Argyriou A., Dinuzzo F. (2014) A unifying view of representer theorems. In:
  International Conference on Machine Learning (ICML), pp. 748--756

\bibitem[{Aronszajn(1950)}]{aronszajn1950theory}
Aronszajn N. (1950) Theory of reproducing kernels. Transactions of the American
  Mathematical Society 68(3):337--404

\bibitem[{Campi and Weyer(2005)}]{campi2005guaranteed}
Campi M.~C., Weyer E. (2005) Guaranteed non-asymptotic confidence regions in
  system identification. Automatica 41(10):1751--1764

\bibitem[{Car\`e et~al.(2018)Car\`e, Cs\'aji, Campi, and Weyer}]{Algo2018}
Car\`e A., Cs\'aji B.~{\relax{Cs}}., Campi M., Weyer E. (2018) Finite-sample
  system identification: An overview and a new correlation method. IEEE Control
  Systems Letters 2(1):61 -- 66

\bibitem[{Cs{\'a}ji(2016)}]{csaji2016score}
Cs{\'a}ji B.~{\relax Cs}. (2016) Score permutation based finite sample
  inference for generalized autoregressive conditional heteroskedasticity
  ({GARCH}) models. In: 19th International Conference on Artificial
  Intelligence and Statistics (AISTATS), Cadiz, Spain, pp. 296--304

\bibitem[{Cs{\'a}ji and Weyer(2015)}]{csaji2015closed}
Cs{\'a}ji B.~{\relax{Cs}}., Weyer E. (2015) Closed-loop applicability of the
  \relax{Sign-Perturbed Sums} method. In: 54th IEEE Conference on Decision and
  Control (CDC), IEEE, pp. 1441--1446

\bibitem[{Cs\'aji et~al.(2012)Cs\'aji, Campi, and Weyer}]{Csaji2012b}
Cs\'aji B.~{\relax Cs}., Campi M.~C., Weyer E. (2012) \relax{Sign-Perturbed
  Sums} (\relax{SPS}): A method for constructing exact finite-sample confidence
  regions for general linear systems. In: 51st IEEE Conference on Decision and
  Control, Maui, Hawaii, pp. 7321--7326

\bibitem[{Cs{\'a}ji et~al.(2015)Cs{\'a}ji, Campi, and Weyer}]{SPSPaper2ITA}
Cs{\'a}ji B.~{\relax{Cs}}., Campi M.~C., Weyer E. (2015) \relax{Sign-Perturbed
  Sums}: A new system identification approach for constructing exact
  non-asymptotic confidence regions in linear regression models. IEEE
  Transactions on Signal Processing 63:169--181

\bibitem[{Davies et~al.(2009)Davies, Kovac, and
  Meise}]{davies2009nonparametric}
Davies P.~L., Kovac A., Meise M. (2009) Nonparametric regression, confidence
  regions and regularization. The Annals of Statistics pp. 2597--2625

\bibitem[{DeGroot and Schervish(2012)}]{degroot2012probability}
DeGroot M.~H., Schervish M.~J. (2012) Probability and Statistics, 4th edn.
  Pearson Education

\bibitem[{Efron and Tibshirani(1994)}]{efron1994introduction}
Efron B., Tibshirani R.~J. (1994) An Introduction to the Bootstrap. CRC press

\bibitem[{Gin{\'e} and Nickl(2015)}]{gine2015mathematical}
Gin{\'e} E., Nickl R. (2015) Mathematical Foundations of Infinite-Dimensional
  Statistical Models, vol~40. Cambridge University Press

\bibitem[{Good(2005)}]{good2005permutation}
Good P. (2005) Permutation, Parametric, and Bootstrap Tests of Hypotheses.
  Springer

\bibitem[{Hofmann et~al.(2008)Hofmann, Sch{\"o}lkopf, and
  Smola}]{hofmann2008kernel}
Hofmann T., Sch{\"o}lkopf B., Smola A.~J. (2008) Kernel methods in machine
  learning. The Annals of Statistics 36:1171--1220

\bibitem[{Kimeldorf and Wahba(1971)}]{kimeldorf1971some}
Kimeldorf G., Wahba G. (1971) Some results on tchebycheffian spline functions.
  Journal of Mathematical Analysis and Applications 33(1):82--95

\bibitem[{Kolumb\'{a}n(2016)}]{KolumbanThesis2016}
Kolumb\'{a}n S. (2016) System identification in highly non-informative
  environment. PhD thesis, Budapest University of Technology and Economics,
  Hungary, and Vrije Univesiteit Brussels, Belgium

\bibitem[{Li(1989)}]{li1989honest}
Li K.-C. (1989) Honest confidence regions for nonparametric regression. The
  Annals of Statistics pp. 1001--1008

\bibitem[{Pillonetto et~al.(2014)Pillonetto, Dinuzzo, Chen, De~Nicolao, and
  Ljung}]{pillonetto2014kernel}
Pillonetto G., Dinuzzo F., Chen T., De~Nicolao G., Ljung L. (2014) Kernel
  methods in system identification, machine learning and function estimation: A
  survey. Automatica 50(3):657--682

\bibitem[{Rasmussen and Williams(2006)}]{Rasmussen2006}
Rasmussen C.~E., Williams C. K.~I. (2006) Gaussian Processes for Machine
  Learning. Adaptive Computation and Machine Learning, MIT Press, Cambridge, MA

\bibitem[{Sch{\"o}lkopf and Smola(2001)}]{scholkopf2001learning}
Sch{\"o}lkopf B., Smola A.~J. (2001) Learning with kernels: support vector
  machines, regularization, optimization, and beyond. MIT Press

\bibitem[{Sch{\"o}lkopf et~al.(2001)Sch{\"o}lkopf, Herbrich, and
  Smola}]{scholkopf2001generalized}
Sch{\"o}lkopf B., Herbrich R., Smola A.~J. (2001) A generalized representer
  theorem. In: Annual Conference on Learning Theory (COLT), Springer, pp.
  416--426

\bibitem[{Shawe-Taylor and Cristianini(2004)}]{shawe2004kernel}
Shawe-Taylor J., Cristianini N. (2004) Kernel Methods for Pattern Analysis.
  Cambridge University Press

\bibitem[{Steinwart and Christmann(2008)}]{steinwart2008support}
Steinwart I., Christmann A. (2008) Support Vector Machines. Springer Science \&
  Business Media

\bibitem[{Suykens and Vandewalle(1999)}]{suykens1999least}
Suykens J. A.~K., Vandewalle J. (1999) Least squares support vector machine
  classifiers. Neural Processing Letters 9(3):293--300

\bibitem[{Vapnik(1998)}]{Vapnik1998}
Vapnik V.~N. (1998) Statistical Learning Theory. Wiley-Interscience

\bibitem[{Vovk et~al.(2005)Vovk, Gammerman, and Shafer}]{vovk2005algorithmic}
Vovk V., Gammerman A., Shafer G. (2005) Algorithmic Learning in a Random World.
  Springer Science \& Business Media

\bibitem[{Wang et~al.(2007)Wang, Yeung, and Lochovsky}]{wang2007kernel}
Wang G., Yeung D.-Y., Lochovsky F.~H. (2007) The kernel path in kernelized
  \relax{LASSO}. In: Proceedings of the Eleventh International Conference on
  Artificial Intelligence and Statistics (AISTATS), pp. 580--587

\bibitem[{Ye and Xiong(2007)}]{ye2007svm}
Ye J., Xiong T. (2007) {SVM} versus least squares {SVM}. In: 11th International
  Conference on Artificial Intelligence and Statistics (AISTATS), pp. 644--651

\bibitem[{Yu et~al.(2013)Yu, Cheng, Schuurmans, and
  Szepesv{\'a}ri}]{yu2013characterizing}
Yu Y., Cheng H., Schuurmans D., Szepesv{\'a}ri {\relax Cs}. (2013)
  Characterizing the representer theorem. In: International Conference on
  Machine Learning, pp. 570--578

\end{thebibliography}

\end{document}